\documentclass[letterpaper]{article}
\usepackage[preprint]{aaai2027}
\usepackage[hyphens]{url}
\usepackage{graphicx}
\usepackage{natbib}
\usepackage{caption}
\usepackage{soul}
\usepackage{amsmath}
\usepackage{amsthm}
\usepackage{amssymb}
\usepackage{booktabs}
\usepackage{algorithm}
\usepackage{algorithmic}
\usepackage{multirow}
\usepackage{array}
\usepackage{tikz}

\newtheorem{theorem}{Theorem}
\newtheorem{lemma}{Lemma}
\newif\ifwithappendix
\withappendixtrue 

\title{Diffusion-Guided Uncertainty-Aware Delayed Policy Optimization}

\author{
    Junqi Tu\textsuperscript{\rm 1},
    Zejiao Liu\textsuperscript{\rm 2},
    Fangfei Li\textsuperscript{\rm 2}\corresponding,
    Yang Tang\textsuperscript{\rm 1}\corresponding
}
\affiliations
{
    \textsuperscript{\rm 1}The Key Laboratory of Smart Manufacturing in Energy Chemical Process, Ministry of Education,\\
    East China University of Science and Technology, Shanghai, China\\
    \textsuperscript{\rm 2}The School of Mathematics, East China University of Science and Technology, Shanghai, China\\
    \{23012389,liuzejiao\}@mail.ecust.edu.cn,
    \{lifangfei,yangtang\}@ecust.edu.cn
}

\begin{document}

\maketitle

\begin{abstract}
Reinforcement learning in real world environments often suffers from severe performance degradation due to delayed feedback. Existing approaches typically mitigate performance degradation caused by observation delays by constructing augmented states or predicting the true states. However, these methods often overlook the inherent discrepancy between delayed state and true states induced by stochastic MDP. We theoretically prove the existence of such a discrepancy and show that it leads to the degradation of the optimal policy. To address this challenge, we propose Diffusion Guided Uncertainty Aware Delayed Policy Optimization (DUPO). Our method explicitly models the relationship between delayed state message and the current state using a diffusion model, and leverages the resulting discrepancy estimates to weight delayed policies. Extensive experiments on continuous robotic control tasks with multiple stochastic delays demonstrate that DUPO consistently outperforms existing methods and remains effective even under long and random delay scenarios.

\end{abstract}

\section{Introduction}
Reinforcement learning (RL) has recently achieved remarkable progress in a variety of complex decision-making domains, including Go \citep{Silver2018AlphaZero}, competitive multiplayer online games \citep{Berner2019Dota2}, continuous control and simulated robotics \citep{Haarnoja2018SAC}, as well as autonomous driving related tasks \citep{Feng2023DenseDRL}. 
Despite these successes, an implicit assumption is often made that agent--environment interactions are instantaneous, such that actions take effect immediately and observations faithfully reflect the current state.
In real-world systems, however, this assumption frequently breaks down due to communication latency, sensor pipelines, and the computational overhead of model inference. 
Without properly modeling and compensating for delays, RL agents may suffer severe degradation and even safety-critical failures \citep{DulacArnold2021ChallengesRealWorldRL}; for instance, substantial performance loss in trading \citep{HasbrouckSaar2013LowLatencyTrading}, instability in dynamical systems \citep{DugardVerriest1998TimeDelay,GuNiculescu2003AdvancesTimeDelay}, and reduced training efficiency and reproducibility in real-world robotics \citep{Mahmood2018SettingUpRLRealWorldRobot}. 
Therefore, explicitly incorporating action and observation delays into RL algorithm design is a crucial step towards reliable real-world deployment. 
Existing studies on delayed RL can be broadly categorized into three main lines. 
Memoryless methods learn policies that depend only on the most recent observation, thereby ignoring the non-Markovian nature induced by observation delays \citep{Schuitema2010ControlDelay}. 
While simple to implement, these methods typically incur significant performance drops under large delays. 
Augmentation-based methods restore the Markov property by constructing an information state that concatenates the last observed state with the sequence of actions executed thereafter, reducing delayed RL to an augmented-state delayed MDP \citep{AltmanNain1992ClosedLoopDelayedInformation,KatsikopoulosEngelbrecht2003MDPDelays,Bertsekas2012DynamicProgramming}, on which classical RL algorithms can be applied \citep{Nath2021RevisitingStateAugmentation}. 
However, as the delay horizon increases, the augmented state space expands rapidly, resulting in substantially higher sample complexity and pronounced instability in TD-style learning under long delays, reflecting a classic manifestation of the curse of dimensionality.
Model-based methods seek to avoid learning directly in the rapidly expanding augmented space by instead computing compact statistics  about the unknown current state from history and using them as policy inputs \citep{Walsh2009DelayedFeedback,Firoiu2018ActionDelay,AgarwalAggarwal2021DelayedObservations,Derman2021DelayedEnvironmentsNonStationary,Liotet2021DTRPO}. 
In addition, some works accelerate learning under long delays by introducing auxiliary short-delay tasks; for example, boosting delayed RL via auxiliary short delays can guide learning in the long-delay task and improve sample efficiency \citep{Wu2024AuxiliaryShortDelays}.\\
\noindent Meanwhile, in real-world systems, delay durations are often stochastic rather than fixed. In random-delay environments, the discrepancy between delayed observations and the true current state generally grows with the delay length, causing the optimal policy to vary across different delay regimes. Ignoring such delay heterogeneity may lead the agent to behave overly conservatively under short delays or excessively aggressively under long delays, resulting in degraded performance and unstable training dynamics.
In random MDPs, we theoretically establish the existence of delay-induced state discrepancy and show that it amplifies as the delay length increases. We further demonstrate that approaches based on single-point state prediction inevitably suffer performance degradation, as they fail to capture this constantly changing discrepancy.
To explicitly account for delay-induced state discrepancy in delayed reinforcement learning, we propose Diffusion-Guided Uncertainty-Aware Delayed Policy Optimization (DUPO). The key idea is to model the mismatch between delayed state message and the underlying true state during policy optimization. DUPO leverages a generative diffusion model to characterize the conditional relationship between delayed observations and the current true state, enabling a principled representation of potentially multi-modal discrepancy structures induced by varying delay lengths. Based on these multi-modal state representations, DUPO estimates the uncertainty of the action--value function conditioned on delayed observations and uses this uncertainty to adaptively weight delayed policy updates. This uncertainty-aware weighting yields delay-adaptive policies that adjust their behavior across delay regimes, leading to more stable and sample-efficient learning in random-delay environments.\\
\noindent We transform the originally deterministic continuous-control MuJoCo tasks into stochastic MDPs by injecting Gaussian noise into the action execution, and further introduce random observation delays to emulate realistic delayed-feedback settings. Across a wide range of delay configurations, DUPO consistently demonstrates superior performance and improved training stability compared to other strong baselines designed for stochastic delayed environments. These results indicate that DUPO is not only effective under random delays but also robust to the compounded challenges arising from environmental stochasticity and delay variability. Our contributions can be summarized as follows:
\begin{itemize}
    \item We theoretically prove the existence of delay-induced state discrepancy and show that it amplifies with increasing delay length, and further demonstrate that single-point state prediction methods inevitably suffer from performance degradation.
    \item We model the conditional distribution between delayed state message and the true current state using a generative diffusion model instead of single-point state prediction, capturing multi-modal discrepancy structures induced by random delays.
    \item We estimate action-value uncertainty under delayed observations and leverage it to adaptively weight delayed policy updates, enabling delay-adaptive policy learning.
    \item DUPO consistently outperforms state-of-the-art baselines across multiple random delay settings.
\end{itemize}

\begin{figure}
    \centering
    \includegraphics[width=1\linewidth]{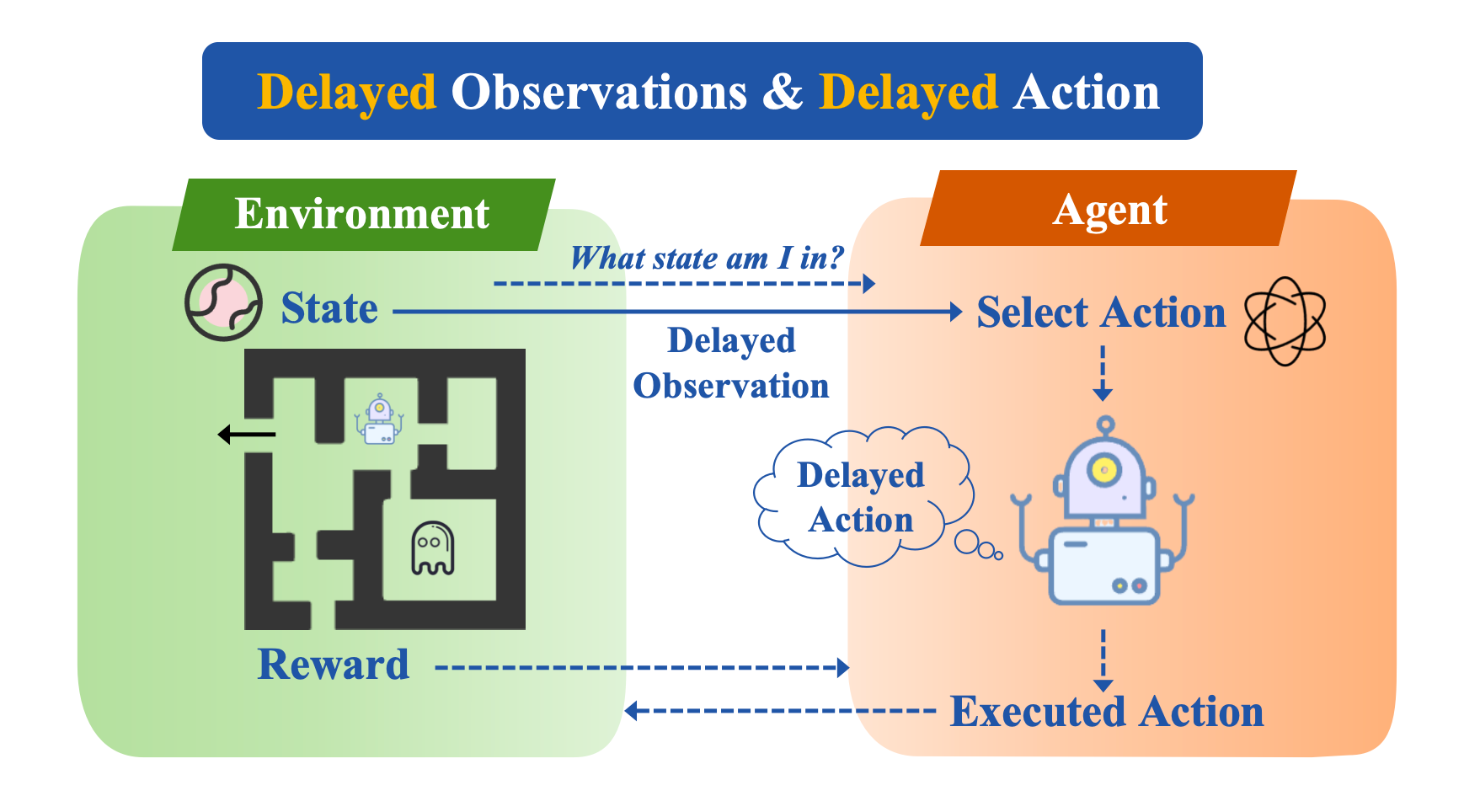}
    \caption{The agent faces observation delays and action execution delays.}
    \label{fig:delay_feedback}
\end{figure}
\section{Related Work}
\label{sec:related_work}

\paragraph{Reinforcement learning with delayed feedback.}
Existing studies on delayed reinforcement learning can be broadly categorized into three main lines. 
Memoryless methods learn policies that depend only on the most recent observation, thereby ignoring the non-Markovian nature induced by observation delays \citep{Schuitema2010ControlDelay}. 
While simple to implement, these methods typically suffer from severe performance degradation as the delay length increases. \\
\noindent Augmentation-based methods restore the Markov property by constructing an information state that concatenates the last observed state with the sequence of subsequently executed actions, reducing delayed reinforcement learning to an augmented-state MDP \citep{AltmanNain1992ClosedLoopDelayedInformation,KatsikopoulosEngelbrecht2003MDPDelays,Bertsekas2012DynamicProgramming}. 
This formulation enables the direct application of standard RL algorithms \citep{Nath2021RevisitingStateAugmentation}. 
However, the augmented state space grows rapidly with the delay horizon, leading to substantially increased sample complexity and pronounced instability in TD-style learning under long delays, reflecting a classic manifestation of the curse of dimensionality. \\
\noindent Model-based and belief-based methods seek to avoid learning directly in the rapidly expanding augmented space by instead estimating compact statistics or distributions of the unknown current state from historical information and using them as policy inputs \citep{Walsh2009DelayedFeedback,Firoiu2018ActionDelay,AgarwalAggarwal2021DelayedObservations,Derman2021DelayedEnvironmentsNonStationary,Liotet2021DTRPO}. 
Recent advances further incorporate principled probabilistic modeling and variational inference to address delayed decision making; for example, Variational Delayed Policy Optimization formulates delayed RL from a variational perspective and optimizes policies under delayed information constraints \citep{Wu2024VDPO}, while other works analyze and mitigate signal delays in deep RL through delay-aware architectures and training strategies \citep{wang2024addressing}. 
In addition, several approaches accelerate learning under long delays by introducing auxiliary short-delay tasks; leveraging such auxiliary delays has been shown to effectively guide policy learning in long-delay settings and improve sample efficiency \citep{Wu2024AuxiliaryShortDelays}.
\paragraph{Diffusion models and uncertainty-aware reinforcement learning.}
Generative modeling has recently become a powerful tool in reinforcement learning, enabling expressive and potentially multi-modal representations of policies, dynamics, and latent states \citep{Ding2024QVPO}. 
Diffusion models, in particular, are well suited for learning complex conditional distributions and have been used to improve robustness against observation perturbations in offline and model-based RL \citep{Yang2024DMBP}. 
In parallel, uncertainty-aware value estimation and policy optimization, such as ensemble-based critics and uncertainty-weighted updates, have been shown to improve robustness under partial observability and distribution shift \citep{Wu2021UWAC}. 
Building on these advances, we leverage diffusion modeling to explicitly capture the multi-modal state discrepancy induced by random observation delays and incorporate the resulting uncertainty into policy optimization, yielding delay-regime-adaptive behavior in delayed RL environments.

\section{Preliminaries}
\label{sec:prelim}

A Partially Observable Markov Decision Process (POMDP)   \citep{kaelbling1998planning} can be defined by a tuple $\mathcal{M}=(\mathcal{S},\mathcal{A},T,r,\Omega,O,\gamma)$, where $\mathcal{S}$, $\mathcal{A}$, and $\Omega$ denote the state, action, and observation spaces. The transition dynamics are defined by  $T(s'\mid s,a)$,the reward function by $r(s,a)$, and the observation probabilities by $O(o\mid s)$. At each timestep $t$, the environment is in a latent state $s_t\in\mathcal{S}$, the agent receives an observation $o_t\sim O(\cdot\mid s_t)$, selects an action $a_t\in\mathcal{A}$ according to a policy $\pi$, obtains a reward $r_t=r(s_t,a_t)$, and the environment transitions to the next state according to $s_{t+1}\sim T(\cdot\mid s_t,a_t)$. The agent repeats this process to find an optimal policy $\pi^*$ that maximizes the expected discounted cumulative reward (i.e., return) over a finite or infinite horizon $H$, which is given by
\begin{equation}
\pi^* = \arg\max_{\pi} \; \mathbb{E} \left[ \sum_{k=0}^{H-1} \gamma^{k} r_{k+1} \;\middle|\; \pi, d_0 \right].
\label{eq:optimal_policy}
\end{equation}
where $d_0$ denotes the initial state distribution.
 In the fully observable case, $o_t=s_t$ almost surely, and the process reduces to a standard Markov Decision Process (MDP).

\subsection{Random Observation Delay POMDP}
\label{subsec:random_delay_mdp}

We model \emph{random observation delay} by augmenting the above POMDP with a random delay process.
Formally, a random observation delay environment is defined as a pair
\(\langle \mathcal{M}, \mathcal{D} \rangle\),
where \(\mathcal{M}\) is a POMDP and \(\mathcal{D}\) is a probability distribution over non-negative integers,
representing random observation delays.
At each timestep \(t\), instead of observing the current state \(s_t\),
the agent receives a delayed state observation corresponding to the latent state
\(s_{t-\Delta T}\), where the delay \(\Delta T \sim \mathcal{D}\).
In addition to the delayed state, the agent has access to the sequence of actions
\(a_{t-\Delta T:t-1}\) executed since the observation was generated.
Together, the delayed state and the history actions form a
\emph{delayed state message}, denoted by $M_t = \big(s_{t-\Delta T},\, a_{t-\Delta T:t-1}\big)$
which summarizes all information available to the agent at time \(t\)
and serves as the input for decision making.
Throughout this paper, we assume that the delay is finite and bounded by a maximum delay \(D\).

\subsection{Conditional Diffusion Model}
\label{subsec:conditional_diffusion_prelim}

A conditional diffusion model is a generative model that learns a conditional data distribution
$p_\theta(x_0 \mid c)$, where $x_0 \in \mathbb{R}^d$ denotes a target variable and $c$ denotes conditional information.
It introduces a sequence of latent variables $\{x_k\}_{k=1}^{K}$ by gradually perturbing $x_0$ with Gaussian noise
according to a fixed variance schedule $\{\beta_k\}_{k=1}^{K}$:
\begin{equation}
q(x_k \mid x_{k-1}) = \mathcal{N}\!\left(\sqrt{1-\beta_k}\,x_{k-1},\,\beta_k I\right), \quad k=1,\dots,K.
\end{equation}
Let $\bar{\alpha}_k=\prod_{i=1}^{k}(1-\beta_i)$, then the marginal at step $k$ admits the closed form \citep{DBLP:conf/nips/HoJA20} 
\begin{equation}
x_k=\sqrt{\bar{\alpha}_k}\,x_0+\sqrt{1-\bar{\alpha}_k}\,\epsilon,\qquad \epsilon\sim\mathcal{N}(0,I).
\end{equation}
The reverse generative process is parameterized by a conditional noise predictor $\epsilon_\theta(x_k,k,c)$
and is trained with the standard noise-prediction objective
\begin{equation}
\min_\theta\;\mathbb{E}_{x_0,k,\epsilon}\Big[\big\|\epsilon-\epsilon_\theta(x_k,k,c)\big\|_2^2\Big],
\end{equation}
where $k$ is sampled uniformly from $\{1,\dots,K\}$.
At inference time, given conditioning $c$, a sample $x_0 \sim p_\theta(\cdot \mid c)$ is obtained by starting from
$x_K \sim \mathcal{N}(0,I)$ and iteratively applying the learned reverse denoising transitions.
This framework provides a flexible approximation to complex conditional distributions and naturally represents
uncertainty through random sampling.

\begin{figure*}[t]
    \centering
    \includegraphics[width=1\linewidth]{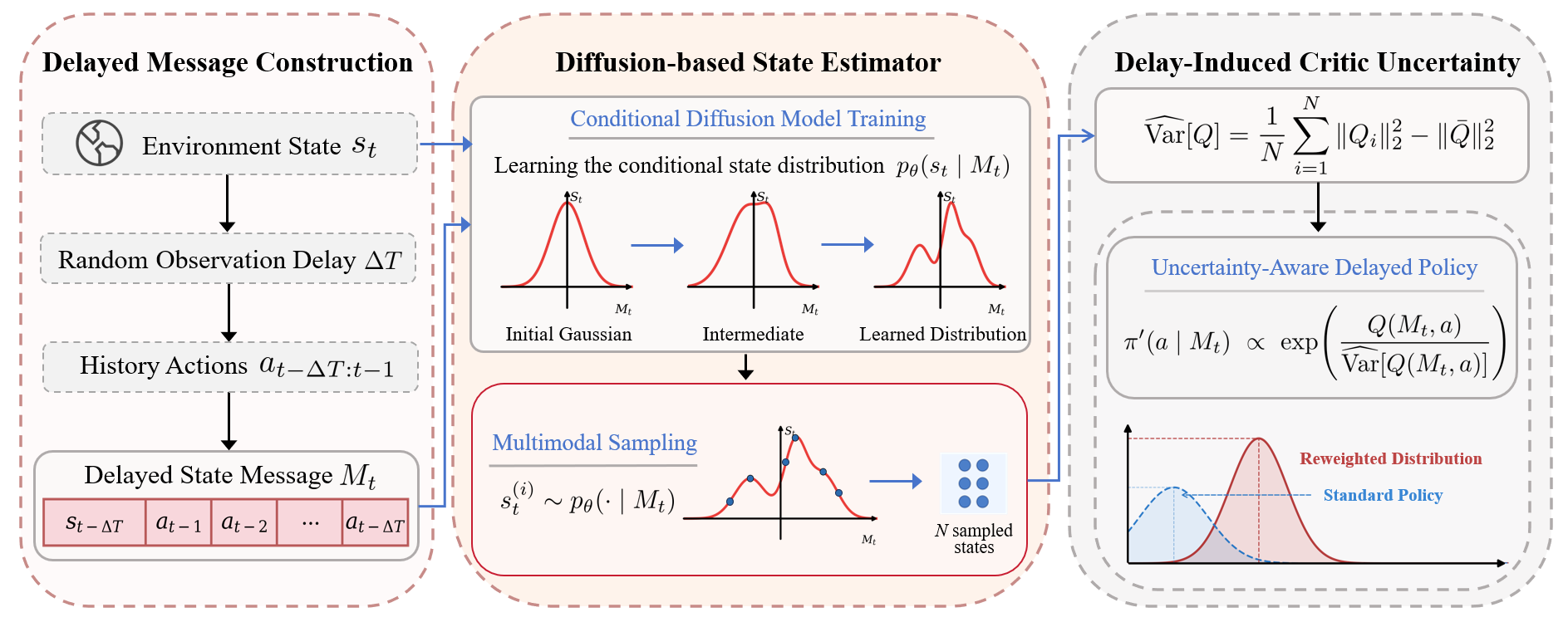}
\caption{Overview of DUPO. Left: we construct a delayed state message $M_t$ by combining the delayed observation $s_{t-\Delta T}$ with the intervening action history. Middle: a conditional diffusion model learns the posterior $p_\theta(s_t \mid M_t)$ and generates multi-modal samples of the delay-free state. Right: critic uncertainty induced by state discrepancy is estimated from these samples and used to uncertainty-weight policy updates, yielding delay-adaptive weighted action distributions.}

    \label{fig:dupo_overview}
\end{figure*}

\section{Our Approach}
In this section, we introduce Diffusion-Guided Uncertainty-Aware Delayed Policy Optimization (DUPO). DUPO first learns a conditional diffusion model to represent the posterior over the latent delay-free state from delayed state messages, and then we leverage the posterior distribution to characterize the state discrepancy induced by observation delays and quantify the resulting uncertainty in action value estimation. We then weight candidate actions according to their value uncertainty to obtain a more stable delayed policy. Finally, by integrating this uncertainty-weighted delayed policy into Soft Actor-Critic, we arrive at our DUPO algorithm.
\subsection{Discrepancy under Random Delays}
\label{subsec:discrepancy}

We study a stochastic delayed MDP with latent dynamics
$s_{t+1}=f(s_t,a_t)+\xi_t$ and delayed message $M_t=(s_{t-\Delta T},a_{t-\Delta T:t-1})$.
Random delays and system noise induce an inherent mismatch between $M_t$ and the current state $s_t$. The bounds below are stated conditional on a realized delay value $\Delta T$; averaging over a random delay distribution yields the corresponding bounds with the last terms averaged over $\Delta T$. The detailed proof is provided in the supplementary material.

\begin{theorem}[Irreducible and delay-amplified state discrepancy]
\label{thm:irreducible_delay_discrepancy}
Let $\Sigma(M_t):=\mathrm{Cov}(s_t\mid M_t)$.
Under mild regularity conditions, any measurable estimator $g(M_t)$ satisfies
\begin{align}
\mathbb{E}\|s_t-g(M_t)\|_2^2
\;\ge\;
\mathbb{E}\,\mathrm{tr}\!\left(\Sigma(M_t)\right)
\;\ge\;
\nu \sum_{i=0}^{\Delta T-1}\kappa^i
\;\ge\;
\nu \Delta T.
\label{eq:irreducible_chain}
\end{align}
where $\nu>0$ is the intrinsic noise level and $\kappa\ge1$ captures uncertainty amplification.
\end{theorem}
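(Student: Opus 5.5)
The chain in \eqref{eq:irreducible_chain} splits into three inequalities, and I will prove them in order: first a bias--variance (orthogonality) argument, then a recursive variance-propagation argument along the unobserved segment of the trajectory, and finally a trivial geometric-sum bound. I will make the ``mild regularity conditions'' concrete as follows. The noise $\xi_t$ is independent of the past $(s_{0:t},a_{0:t})$ and of the delay, with $\mathrm{Cov}(\xi_t)\succeq \sigma^2 I$, and I set $\nu:=\mathrm{tr}$ of this lower bound (or simply $\nu = d\sigma^2$). I assume $f$ is differentiable in $s$ and satisfies a lower expansion condition: for the relevant random states, $\mathrm{tr}\,\mathrm{Cov}(f(s,a)\mid \mathcal{G}) \ge \kappa\,\mathrm{tr}\,\mathrm{Cov}(s\mid\mathcal{G})$ for some $\kappa\ge 1$. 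This holds, for instance, if $J_f^\top J_f\succeq \kappa I$ when $f$ is linear in $s$, or via a co-Lipschitz condition in general. Finally, I assume second moments are finite and that the actions $a_{t-\Delta T:t-1}$ are already measurable with respect to the message.

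\textbf{Step 1.} For any measurable $g$, write $s_t-g(M_t)=(s_t-\mathbb{E}[s_t\mid M_t])+(\mathbb{E}[s_t\mid M_t]-g(M_t))$. The cross term vanishes by the tower property, because the second bracket is $M_t$-measurable. This gives $\mathbb{E}\|s_t-g\|^2=\mathbb{E}\,\mathrm{tr}\,\Sigma(M_t)+\mathbb{E}\|\mathbb{E}[s_t\mid M_t]-g\|^2\ge \mathbb{E}\,\mathrm{tr}\,\Sigma(M_t)$.

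\textbf{Step 2.} Fix the realized $\Delta T=n$ and set $V_k:=\mathbb{E}\,\mathrm{tr}\,\mathrm{Cov}(s_{t-n+k}\mid M_t)$ for $k=0,\dots,n$. Because $s_{t-n}$ is part of $M_t$, we have $V_0=0$. I need to argue that $\xi_{t-n+k}$ remains independent of $(s_{t-n+k},M_t)$. This follows because the actions in $M_t$ are treated as fixed conditioning variables: they were chosen using information available before the state was revealed, or the policy's randomness is independent of future noise. That is the place where I need an assumption that actions do not ``peek'' at future noise. With this in hand, the conditional covariance splits additively:
\[
\mathrm{tr}\,\mathrm{Cov}(s_{t-n+k+1}\mid M_t)=\mathrm{tr}\,\mathrm{Cov}(f(s_{t-n+k},a_{t-n+k})\mid M_t)+\mathrm{tr}\,\mathrm{Cov}(\xi_{t-n+k})\ge \kappa\,\mathrm{tr}\,\mathrm{Cov}(s_{t-n+k}\mid M_t)+\nu .
\]
This yields $V_{k+1}\ge\kappa V_k+\nu$. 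Unrolling from $V_0=0$ gives $V_n\ge \nu\sum_{i=0}^{n-1}\kappa^i$, which is the middle inequality.

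\textbf{Step 3.} Since $\kappa\ge1$, each $\kappa^i\ge1$, so the sum is at least $\nu n=\nu\Delta T$. Averaging over $\Delta T\sim\mathcal{D}$ then gives the averaged version mentioned before the theorem.

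The main obstacle is the expansion inequality in Step 2 for nonlinear $f$. Conditional variance does not generally scale under nonlinear maps, and a contracting $f$ would make $\kappa<1$. My first attempt will be to state the condition directly as an assumption on $f$, phrased as conditional-covariance non-contraction, so that the recursion goes through cleanly. I will note that for linear--Gaussian dynamics $s_{t+1}=As_t+Ba_t+\xi_t$ it reduces to $\sigma_{\min}(A)^2\ge\kappa$, applied per direction via $\mathrm{tr}(A\Sigma A^\top)\ge\sigma_{\min}(A)^2\,\mathrm{tr}\,\Sigma$. If $\kappa\ge1$ is not assumed, the bound $\nu\Delta T$ fails, but the weaker bound $\ge\nu$ for $\Delta T\ge1$ still holds. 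I will remark on this in the proof.
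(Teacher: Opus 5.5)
Your outline follows the paper's proof step for step: the Bayes-risk decomposition, then the recursion $v_{j+1}\ge\kappa v_j+\nu$ unrolled from $v_0=0$, then $\kappa^i\ge1$. Steps 1 and 3 are fine, and your co-Lipschitz sufficient condition for non-contraction is correct. The gap is in how Step 2 justifies the additive split.

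You claim $\xi_{t-n+k}$ is independent of $(s_{t-n+k},M_t)$ because actions do not peek at \emph{future} noise. But $M_t$ also contains the later actions $a_{t-n+k+1},\dots,a_{t-1}$, and relative to them $\xi_{t-n+k}$ lies in the \emph{past}. Under random delays the agent at an intermediate time $t-n+j$ may have had a delay shorter than $j$; nothing in the model prevents $\Delta T_{t-1}=1$ while $\Delta T_t=n$. In that case $a_{t-n+j}$ can be a function of an intermediate state $s_{t-n+i}$ with $i\ge1$, and hence of $\xi_{t-n},\dots,\xi_{t-n+i-1}$. Conditioning on such an action conditions on a descendant of the noise. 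Then $\xi_{t-n+k}$ is neither independent of $M_t$ nor conditionally uncorrelated with $f(s_{t-n+k},a_{t-n+k})$, and intermediate states can even be revealed outright.

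Here is an instance satisfying all your stated hypotheses. Take scalar dynamics $s_{\tau+1}=s_\tau+\xi_\tau$ with $\mathrm{Var}(\xi_\tau)=\nu$, so $\kappa=1$. Take i.i.d.\ delays uniform on $\{1,2,3\}$, and a deterministic policy whose action records its current observation and its current delay. Conditional on $\Delta T_t=3$, the message reveals $s_{t-2}$ whenever $\Delta T_{t-1}=1$. Hence $\mathbb{E}\,\mathrm{Var}(s_t\mid M_t)=\tfrac13\cdot 2\nu+\tfrac23\cdot 3\nu=\tfrac83\nu<3\nu$, and your recursion already fails at $k=0$.

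The paper avoids this by imposing the needed facts directly on $\mathcal F_t=\sigma(M_t,\Delta T)$, which rules out exactly this leakage by assumption:
\begin{itemize}
\item $\mathbb{E}[\xi_j\mid\mathcal F_t]=0$;
\item $\mathrm{tr}\,\mathrm{Cov}(f(S_j,A_j),\xi_j\mid\mathcal F_t)=0$;
\item $\mathrm{tr}\,\mathrm{Cov}(\xi_j\mid\mathcal F_t)\ge\nu$;
\item conditional non-contraction $\mathrm{tr}\,\mathrm{Cov}(f(S_j,A_j)\mid\mathcal F_t)\ge\kappa\,\mathrm{tr}\,\mathrm{Cov}(S_j\mid\mathcal F_t)$.
\end{itemize}
With these, your recursion goes through verbatim.

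If you prefer to keep primitive assumptions, you must add an information-flow condition that actually delivers your independence claim. The condition is that every action $a_{t-\Delta T+j}$ in the message depends on the history only through information available up to time $t-\Delta T$, plus randomization independent of the noise. This holds for constant delays. It also holds when the delay can grow by at most one per step and the agent always acts on its freshest observation. Under it, $\xi_{t-\Delta T},\dots,\xi_{t-1}$ are jointly independent of $M_t$, and your split and recursion are valid.
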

\noindent A common workaround is to compensate delays by a point prediction $\hat s_t=g(M_t)$ and act greedily on $\hat s_t$.
The next result shows that such point-estimate methods suffer a delay-amplified performance loss.
\begin{theorem}[Delay-amplified suboptimality of point-estimate methods]
\label{thm:point_estimate_short}
Consider any point estimator $\hat s_t=g(M_t)$ and the induced policy $a_t=a^\star(\hat s_t)$, where
$a^\star(s)=\arg\max_a Q^\star(s,a)$.
If $Q^\star(s,\cdot)$ is $\mu$-strongly concave in $a$ and $a^\star(\cdot)$ is locally inverse-Lipschitz with constant $L_->0$,
then
\begin{align}
J(\pi^\star)-J(\pi_g)\ \ge\ \frac{\mu L_-^2 \nu}{2(1-\gamma)}\,\Delta T.
\label{eq:point_estimate_short}
\end{align}
\end{theorem}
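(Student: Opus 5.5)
The plan is to combine the performance difference lemma with a per-step quadratic loss bound, and then reduce to Theorem~\ref{thm:irreducible_delay_discrepancy}.

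\textbf{Step 1 (performance difference).} Write the suboptimality of $\pi_g$ via the performance difference lemma relative to $\pi^\star$:
\begin{align*}
J(\pi^\star)-J(\pi_g)=\sum_{t\ge0}\gamma^t\,\mathbb{E}_{s_t\sim d_t^{\pi_g}}\!\left[Q^\star(s_t,a^\star(s_t))-Q^\star(s_t,a^\star(\hat s_t))\right].
\end{align*}
Each summand is nonnegative because $a^\star(s_t)$ maximizes $Q^\star(s_t,\cdot)$. So it suffices to lower bound each per-step gap uniformly in $t$ by $\tfrac{\mu L_-^2\nu}{2}\Delta T$. Summing the geometric series then gives the factor $1/(1-\gamma)$.

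\textbf{Step 2 (strong concavity).} Since $Q^\star(s,\cdot)$ is $\mu$-strongly concave and $a^\star(s)$ is its maximizer, the first-order condition $\nabla_a Q^\star(s,a^\star(s))=0$ gives, for every $a$,
\begin{align*}
Q^\star(s,a^\star(s))-Q^\star(s,a)\ \ge\ \tfrac{\mu}{2}\|a-a^\star(s)\|^2.
\end{align*}
Take $s=s_t$ and $a=a^\star(\hat s_t)$.

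\textbf{Step 3 (inverse-Lipschitz).} Use $\|a^\star(s_t)-a^\star(\hat s_t)\|\ge L_-\|s_t-\hat s_t\|$. The per-step gap is then at least $\tfrac{\mu L_-^2}{2}\|s_t-g(M_t)\|^2$.

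\textbf{Step 4 (irreducible discrepancy).} Take expectations and invoke the chain~\eqref{eq:irreducible_chain} from Theorem~\ref{thm:irreducible_delay_discrepancy}. This gives $\mathbb{E}\|s_t-g(M_t)\|^2\ge\nu\Delta T$, which is exactly the per-step bound needed in Step 1.

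\textbf{Main obstacle.} The difficulty is that Steps 3 and 4 hold only under conditions that must be checked.

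The inverse-Lipschitz property is only local. I would first assume it holds on the region reached by $(s_t,\hat s_t)$, treating "locally" as a standing assumption over the relevant support. If it holds only within a radius $r$, I would instead truncate to the event $\{\|s_t-\hat s_t\|\le r\}$ and absorb the rest into the constants.

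Theorem~\ref{thm:irreducible_delay_discrepancy} must also apply under the state distribution $d_t^{\pi_g}$ induced by the delayed policy. This is fine: the theorem holds for any measurable estimator and any action history, because the noise $\xi$ is independent of the actions, which depend only on past messages. It also needs $t\ge\Delta T$; for early steps I would either assume a stationary regime or note that the bound is stated for a realized delay with sufficiently long horizon. Finally, averaging over random $\Delta T$ yields the same bound with $\mathbb{E}[\Delta T]$, as the paper remarks.
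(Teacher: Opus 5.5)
Your proposal is correct and follows the paper's proof essentially step for step: the performance-difference identity with $Q^\star$ under the occupancy of $\pi_g$, quadratic growth from $\mu$-strong concavity at the interior maximizer, the inverse-Lipschitz step giving $\tfrac{\mu L_-^2}{2}\|s_t-g(M_t)\|_2^2$, and then the Bayes-risk decomposition plus the variance recursion behind Theorem~\ref{thm:irreducible_delay_discrepancy}. One caution: your reason why that recursion carries over to $d^{\pi_g}$ (noise independent of the actions) only holds cleanly for constant delays. With i.i.d.\ random delays, an in-window action $a_{t-d+j}$ may have been chosen from a message with delay smaller than $j$, so it can depend on in-window noise, and since $a^\star$ is injective the action history in $M_t$ can leak intermediate states. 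So, as the paper does, simply impose the regularity conditions (centered, non-degenerate innovation noise and non-contraction) conditionally on $\mathcal F_t$ for the process generated by $\pi_g$.
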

\noindent Together, these theorems motivate modeling the full posterior density $p(s_t\mid M_t)$, rather than a single prediction, to properly account for delay-induced uncertainty.

\subsection{Diffusion-based State Estimator}
\label{subsec:dbde}

To model the relationship between delayed state message and the delay-free state, we employ a conditional diffusion model with strong expressive capacity.
Specifically, the model captures the conditional distribution $p_\theta(s_t \mid M_t)$
and provides a flexible representation of the potentially multi-modal posterior distribution induced by random observation delays \citep{DBLP:conf/nips/HoJA20}.\\
\noindent The diffusion model introduces a sequence of latent variables $\{s_t^n\}_{n=0}^N$ with $s_t^0 = s_t$, obtained by progressively corrupting the clean state with Gaussian noise according to a predefined variance schedule.
The forward noising process is defined by the Markov transition
\begin{equation}
q(s_t^{n} \mid s_t^{n-1})
= \mathcal{N}\!\left(
\sqrt{\alpha_n}\, s_t^{n-1},
(1-\alpha_n)I
\right), \quad n = 1,\dots,N.
\label{eq:dbde_forward}
\end{equation}
where $\alpha_n\in (0,1), \forall n=1, \cdots, N$ controls the noise level at each diffusion step.
By composing these transitions, the marginal distribution at step $n$ admits the closed form
\begin{equation}
q(s_t^{n} \mid s_t)
= \mathcal{N}\!\left(
\sqrt{\bar{\alpha}_n}\, s_t,
(1-\bar{\alpha}_n)I
\right),
\qquad
\bar{\alpha}_n = \prod_{i=1}^{n} \alpha_i.
\label{eq:dbde_forward_closed}
\end{equation}
which enables efficient sampling of noisy states at arbitrary diffusion steps.\\
The reverse generative process is parameterized as a conditional Markov chain that inverts the noising process given delayed message $M_t$ \citep{DBLP:conf/nips/DhariwalN21},
\begin{equation}
p_\theta(s_t^{n-1} \mid s_t^{n}, M_t)
= \mathcal{N}\!\left(
\mu_\theta(s_t^{n}, M_t, n),
\sigma_n^2 
\right).
\label{eq:dbde_reverse}
\end{equation}
where the mean $\mu_\theta$ is computed via a neural noise predictor $\epsilon_\theta(s_t^{n}, M_t, n)$ using the standard $\epsilon$-parameterization,
\begin{equation}
\mu_\theta(s_t^{n}, M_t, n)
= \frac{1}{\sqrt{\alpha_n}}
\left(
s_t^{n}
- \frac{1-\alpha_n}{\sqrt{1-\bar{\alpha}_n}}
\epsilon_\theta(s_t^{n}, M_t, n)
\right).
\label{eq:dbde_mean}
\end{equation}
Conditioning on $M_t$ allows the denoising process to adapt to different delay realizations and action histories.\\
Model parameters are learned by minimizing the standard noise-prediction objective.
Specifically, a diffusion step $n$ is sampled uniformly from $\{1,\dots,N\}$, Gaussian noise $\epsilon \sim \mathcal{N}(0,I)$ is drawn, and a noisy state is constructed via
\begin{equation}
s_t^{n}
= \sqrt{\bar{\alpha}_n}\, s_t
+ \sqrt{1-\bar{\alpha}_n}\, \epsilon.
\end{equation}
The training loss is then given by
\begin{equation}
\mathcal{L}_{\mathrm{diff}}
= \mathbb{E}_{s_t, M_t, \epsilon, n}
\left[
\left\|
\epsilon - \epsilon_\theta(s_t^{n}, M_t, n)
\right\|_2^2
\right].
\label{eq:dbde_loss}
\end{equation}\\
After training, the learned reverse process defines an implicit conditional generative model for the delay-free state.
Given delayed message $M_t$, samples from $p_\theta(s_t \mid M_t)$ are obtained by starting from $s_t^N \sim \mathcal{N}(0,I)$ and iteratively applying the reverse transitions in Eq.~\eqref{eq:dbde_reverse}.
This corresponds to the conditional density
\begin{equation}
p_\theta(s_t \mid M_t)
= \int
p_\theta(s_t^{0:N-1} \mid s_t^{N}, M_t)\,
p(s_t^{N})\,
\mathrm{d}s_t^{1:N}.
\label{eq:dbde_conditional}
\end{equation}
where $p(s_t^{N}) = \mathcal{N}(0,I)$.
In our framework, multiple samples from this distribution are used to characterize the discrepancy between delayed state message $M_t$ and the delay-free state $s_t$, which will be exploited by subsequent uncertainty-aware policy optimization.

\setlength{\tabcolsep}{5pt}

\begin{table*}[ht]
\centering
\small
\caption{Normalized performance on MuJoCo-v4 under random observation delays $\Delta T_{\max}\in\{5,10,25\}$.
Each entry reports the final return after training for $1$M global environment steps, averaged over random seeds, for four algorithms (DC/AC, DUPO, State-Pred, and State-Aug).
For each $(\Delta T_{\max},\text{env})$, we normalize by the DC/AC baseline:
$\tilde{\mu}=\mu/\mu_{\text{DC/AC}}$ and $\tilde{\sigma}=\sigma/\mu_{\text{DC/AC}}$.
Blue indicates the best normalized mean within each $\Delta T_{\max}$.}

\label{tab:mujoco_v4_delay_norm}
\begin{tabular}{c l *{5}{c}}
\toprule
\multicolumn{2}{c}{Environment}
& Ant-v4 & HalfCheetah-v4 & Hopper-v4 & Walker2d-v4 & Swimmer-v4 \\
\cmidrule(lr){1-2}
$\Delta T_{\max}$ & Algorithm &  &  &  &  &  \\
\midrule

\multirow{4}{*}{5}
& DC/AC
& $1.00\pm0.10$ & $1.00\pm0.09$ & $1.00\pm0.12$ & $1.00\pm0.33$ & $1.00\pm0.43$ \\
& DUPO (ours)
& \textcolor{blue}{$2.43\pm0.11$} & \textcolor{blue}{$4.53\pm0.30$} & \textcolor{blue}{$7.24\pm0.41$} & \textcolor{blue}{$1.42\pm0.04$} & $1.25\pm0.45$ \\
& State Prediction
& $2.28\pm0.37$ & $3.93\pm0.21$ & $3.19\pm0.75$ & $0.28\pm0.02$ & \textcolor{blue}{$1.65\pm0.15$} \\
& State Augmentation
& $1.43\pm0.30$ & $3.78\pm0.78$ & $6.78\pm1.40$ & $1.19\pm0.37$ & $1.07\pm0.48$ \\
\midrule

\multirow{4}{*}{10}
& DC/AC
& $1.00\pm0.05$ & $1.00\pm0.03$ & $1.00\pm0.14$ & $1.00\pm0.64$ & $1.00\pm0.02$ \\
& DUPO (ours)
& $1.87\pm0.81$ & \textcolor{blue}{$7.53\pm0.15$} & \textcolor{blue}{$30.46\pm0.83$} & \textcolor{blue}{$1.62\pm0.64$} & \textcolor{blue}{$2.36\pm0.79$} \\
& State Prediction
& \textcolor{blue}{$1.90\pm0.35$} & $6.42\pm0.48$ & $6.41\pm3.06$ & $0.45\pm0.24$ & $1.75\pm0.72$ \\
& State Augmentation
& $1.30\pm0.16$ & $5.96\pm0.36$ & $23.71\pm4.24$ & $1.16\pm0.30$ & $2.15\pm0.64$ \\
\midrule

\multirow{4}{*}{25}
& DC/AC
& $1.00\pm0.05$ & $1.00\pm0.60$ & $1.00\pm0.04$ & $1.00\pm0.23$ & $1.00\pm0.02$ \\
& DUPO (ours)
& \textcolor{blue}{$2.14\pm0.92$} & \textcolor{blue}{$15.96\pm0.51$} & \textcolor{blue}{$10.34\pm3.44$} & \textcolor{blue}{$2.59\pm0.95$} & \textcolor{blue}{$2.98\pm0.10$} \\
& State Prediction
& $2.11\pm0.20$ & $13.19\pm0.66$ & $4.01\pm0.89$ & $0.66\pm0.08$ & $2.30\pm0.85$ \\
& State Augmentation
& $1.57\pm0.07$ & $5.10\pm0.95$ & $7.32\pm0.63$ & $0.82\pm0.07$ & $2.75\pm0.58$ \\
\bottomrule
\end{tabular}
\end{table*}
\subsection{Uncertainty-Weighted Delayed SAC}
\label{subsec:uwdsac}

To cope with random observation delays, we explicitly treat the current delay-free state as a latent
variable conditioned on the delayed message
$
M_t=\big(s_{t-\Delta T},\,a_{t-\Delta T:t-1}\big).
$
Using the trained conditional diffusion model, we draw $N$ plausible delay-free states
$
s_t^{(i)}\sim p_\theta(\cdot\mid M_t),
$
which yields a Monte Carlo approximation of the discrepancy induced by delay. This discrepancy is then
propagated to action value estimation: for any action $a$, we quantify the delay-induced critic uncertainty by
the sample variance
\begin{align}
\widehat{\mathrm{Var}}[Q(M_t,a)]
= \frac{1}{N}\sum_{i=1}^N Q_\psi(s_t^{(i)},a)^2
- Q_\psi(s_t,a)^2.
\end{align}
We transform this uncertainty into a reliability weight via inverse variance scaling 
\begin{equation}
    \omega(M_t,a)=\frac{\beta}{\widehat{\mathrm{Var}}\!\left[Q(M_t,a)\right]+\varepsilon}.
\end{equation}
and weight the base actor $\pi_\phi(\cdot\mid M_t)$ to obtain an uncertainty-weighted delayed policy~\citep{Wu2021UWAC}
\begin{equation}
\label{eq:pi_prime}
\pi'_\phi(a\mid M_t)=\frac{\omega(M_t,a)\,\pi_\phi(a\mid M_t)}
{\int \omega(M_t,\tilde a)\,\pi_\phi(\tilde a\mid M_t)\,d\tilde a}.
\end{equation}
Intuitively, actions whose values are highly sensitive to the latent state (hence more risky under delay)
receive smaller probability mass under $\pi'_\phi$, leading to a more conservative policy when the
delayed message is ambiguous.\\
We incorporate $\pi'_\phi$ into SAC while training the critic on delay-free transition tuples, following
the standard treatment in delayed RL~\citep{wang2024addressing}. Concretely,
\begin{align}
J_Q(\psi)
&=\mathbb{E}_{\mathcal{D}}\!\left[\big(Q_\psi(s_t,a_t)-y_t\big)^2\right],
\label{eq:critic_loss}\\
y_t
&= r_t + \gamma\,\mathbb{E}_{a_{t+1}\sim \pi'(\cdot \mid s_{t+1})}
\Big[\, Q_{\psi}(s_{t+1}, a_{t+1}) \notag\\
&\qquad - \alpha \log \pi'(a_{t+1}\mid s_{t+1}) \,\Big].
\label{eq:critic_target}
\end{align}
Since $s_{t+1}$ is delay-free, weighting is unnecessary and $\pi'(\cdot\mid s_{t+1})=\pi(\cdot\mid s_{t+1})$.\\
In practice, we optimize the delayed policy by updating the base actor $\pi_\phi$
under an uncertainty-weighted objective that preserves the effect of weighting.
Specifically, the actor is trained using a weighted SAC surrogate loss, where the
uncertainty weight $\omega(M_t,a)$ modulates the policy update.

\begin{align}
J_{\pi'}(\phi)
&=\mathbb{E}_{(M_t,s_t)\sim\mathcal{D}}
\Big[
\mathbb{E}_{a_t\sim\pi'_\phi(\cdot\mid M_t)}
\Big[ \notag\\
&\qquad
\alpha\log\pi'_\phi(a_t\mid M_t)-Q_\psi(s_t,a_t)
\Big]\Big]\\
&=\mathbb{E}_{(M_t,s_t)\sim\mathcal{D}}
\Big[
\mathbb{E}_{\mathbf{a}_t\sim\pi_\phi}
\Big[
\omega(M_t,a)\Big( \notag\\
&\qquad
\alpha\log\!\left(
\frac{\omega(M_t,a)\pi_\phi(\mathbf{a}_t\mid M_t)}
{\int_a \omega(M_t,a)\pi(a\mid M_t)\,da}
\right) \notag\\
&\qquad
-Q_\psi(s_t,\mathbf{a}_t)
\Big)\Big]\Big].
\label{eq:actor_obj}
\end{align}
This objective preserves the effect of uncertainty-aware weighting while remaining amenable to
standard SAC-style optimization.

\newcommand{\panelimg}[2]{%
\begin{tikzpicture}
  \node[inner sep=0pt, outer sep=0pt, anchor=north west] (im)
      {\includegraphics[width=\linewidth]{#1}};
  \node[anchor=north west, inner sep=0pt, outer sep=0pt,
        xshift=1.2pt, yshift=-1.2pt] at (im.north west)
      {\scriptsize\bfseries (#2)};
\end{tikzpicture}%
}
\begin{figure*}[ht]
    \centering

    \newcommand{\roww}{0.85\textwidth}
    \newcommand{\subw}{0.326\linewidth}
    \newcommand{\hgap}{\hspace{0.007\linewidth}}

    \newcommand{\subwB}{0.355\linewidth}
    \newcommand{\legw}{0.22\linewidth}
    \newcommand{\hgapB}{\hspace{0.010\linewidth}}

    \begin{minipage}{\roww}
        \centering
        \begin{minipage}[t]{\subw}
            \centering
            \panelimg{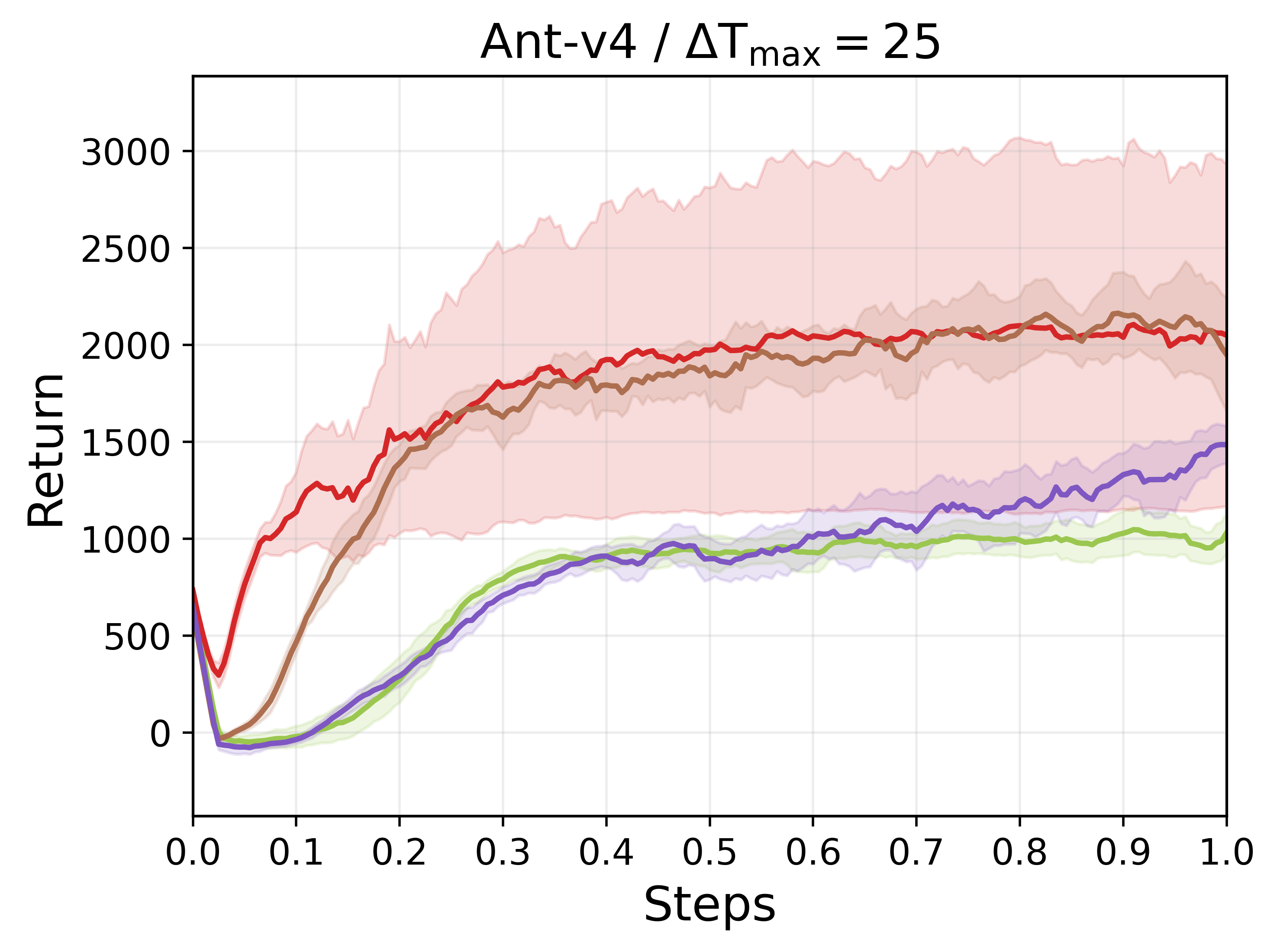}{a}
        \end{minipage}\hgap
        \begin{minipage}[t]{\subw}
            \centering
            \panelimg{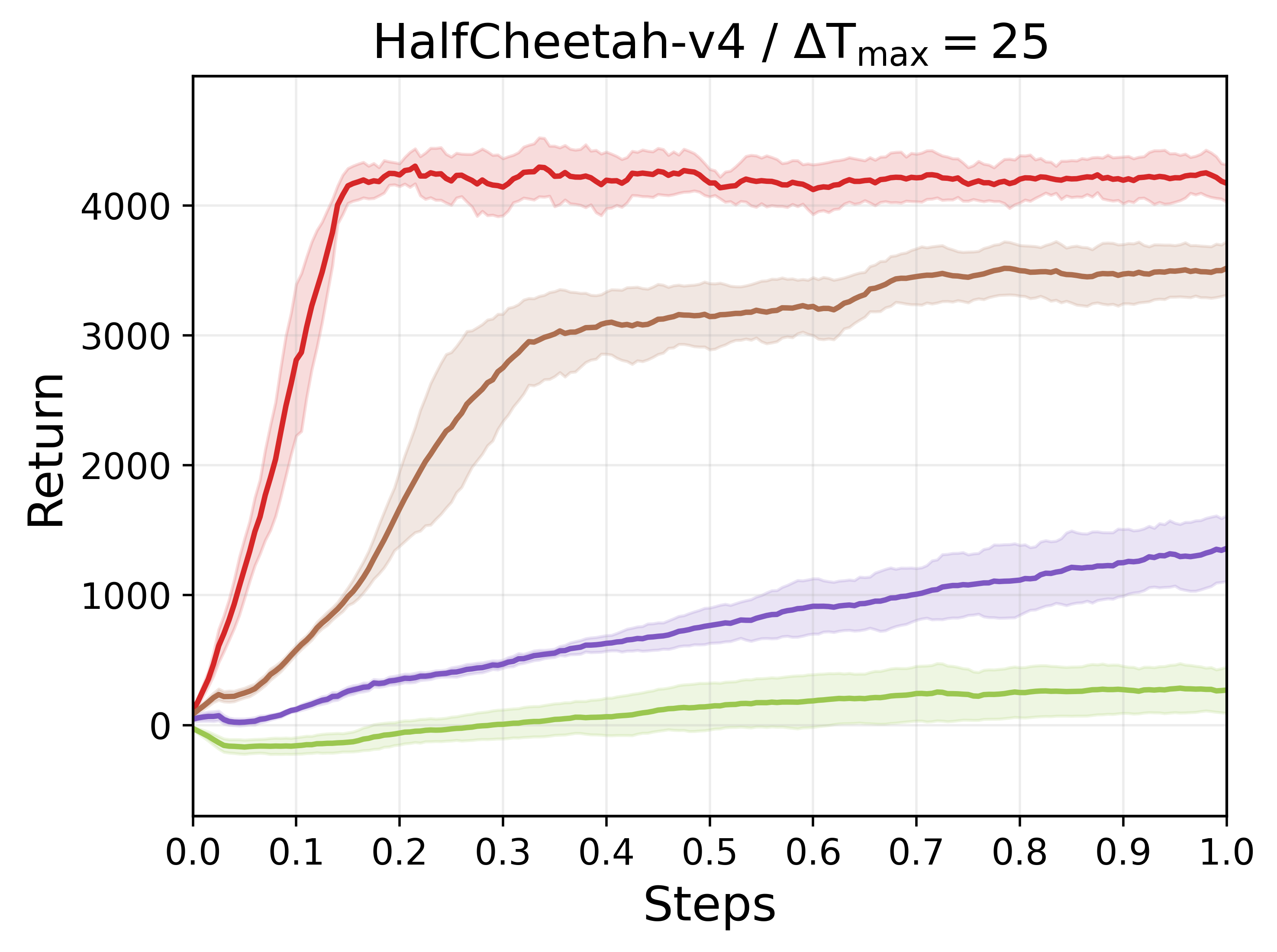}{b}
        \end{minipage}\hgap
        \begin{minipage}[t]{\subw}
            \centering
            \panelimg{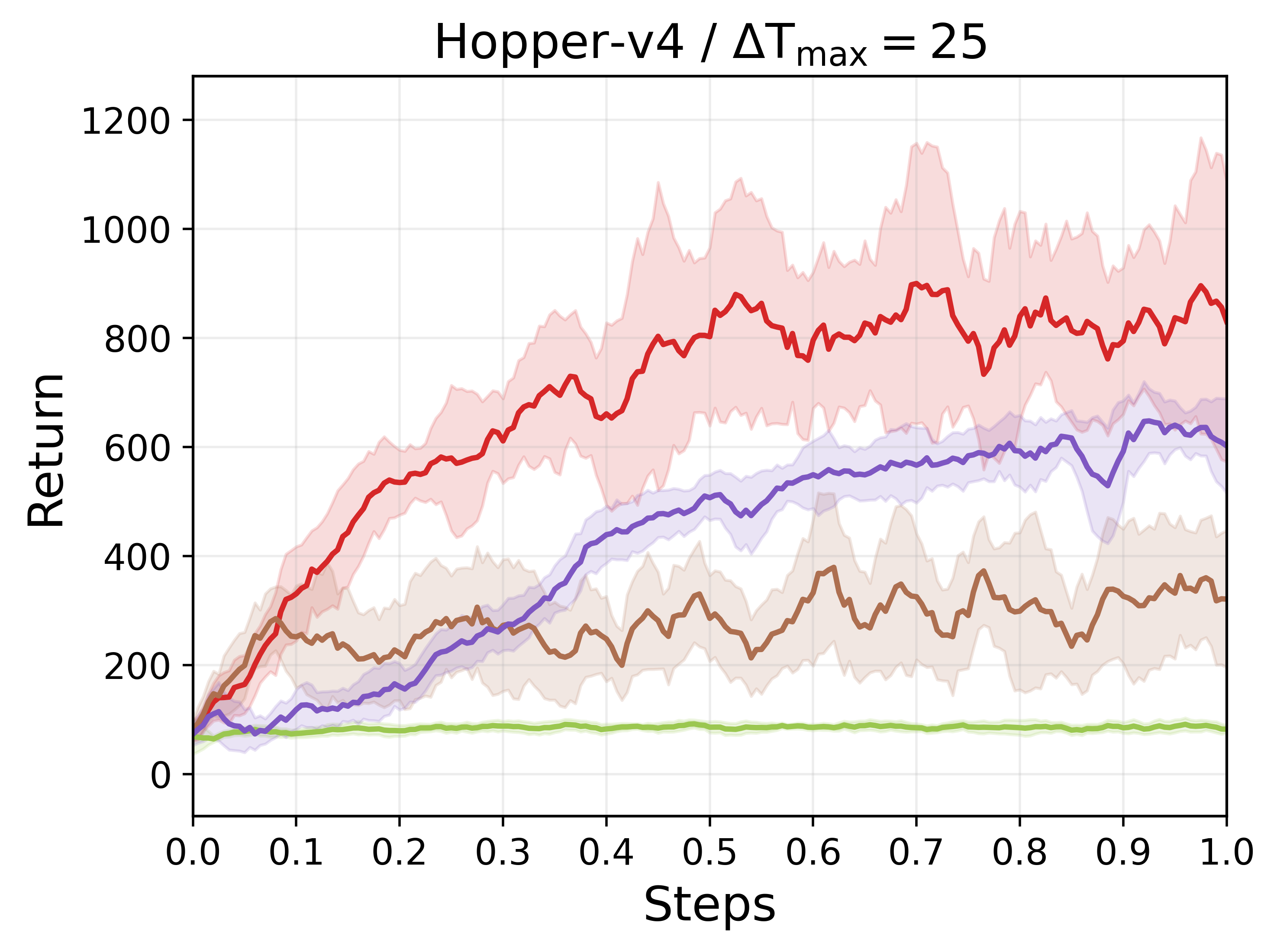}{c}
        \end{minipage}
    \end{minipage}

    \begin{minipage}{\roww}
        \centering
        \begin{minipage}[t]{\subwB}
            \centering
            \panelimg{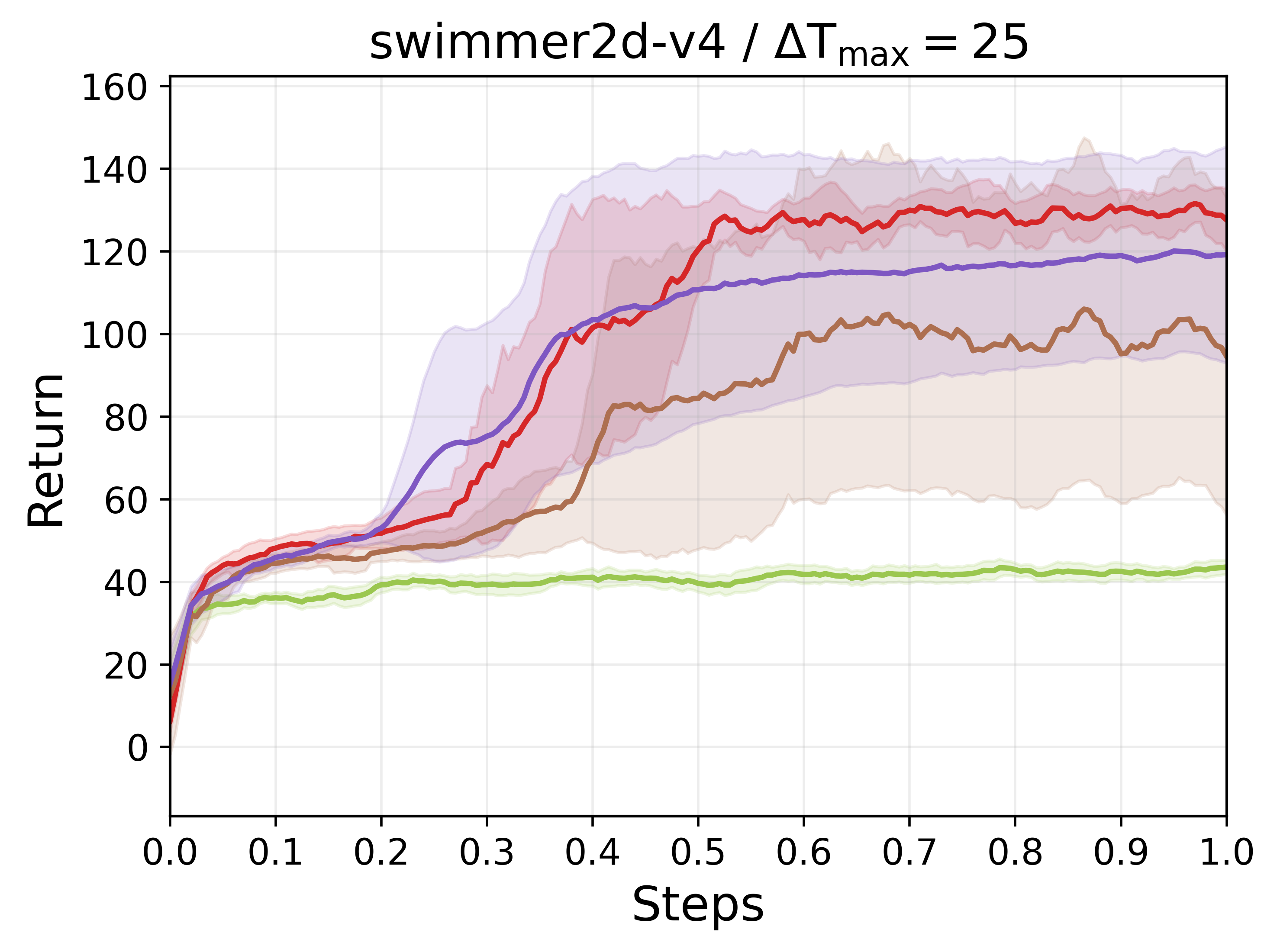}{d}
        \end{minipage}\hgapB
        \begin{minipage}[t]{\subwB}
            \centering
            \panelimg{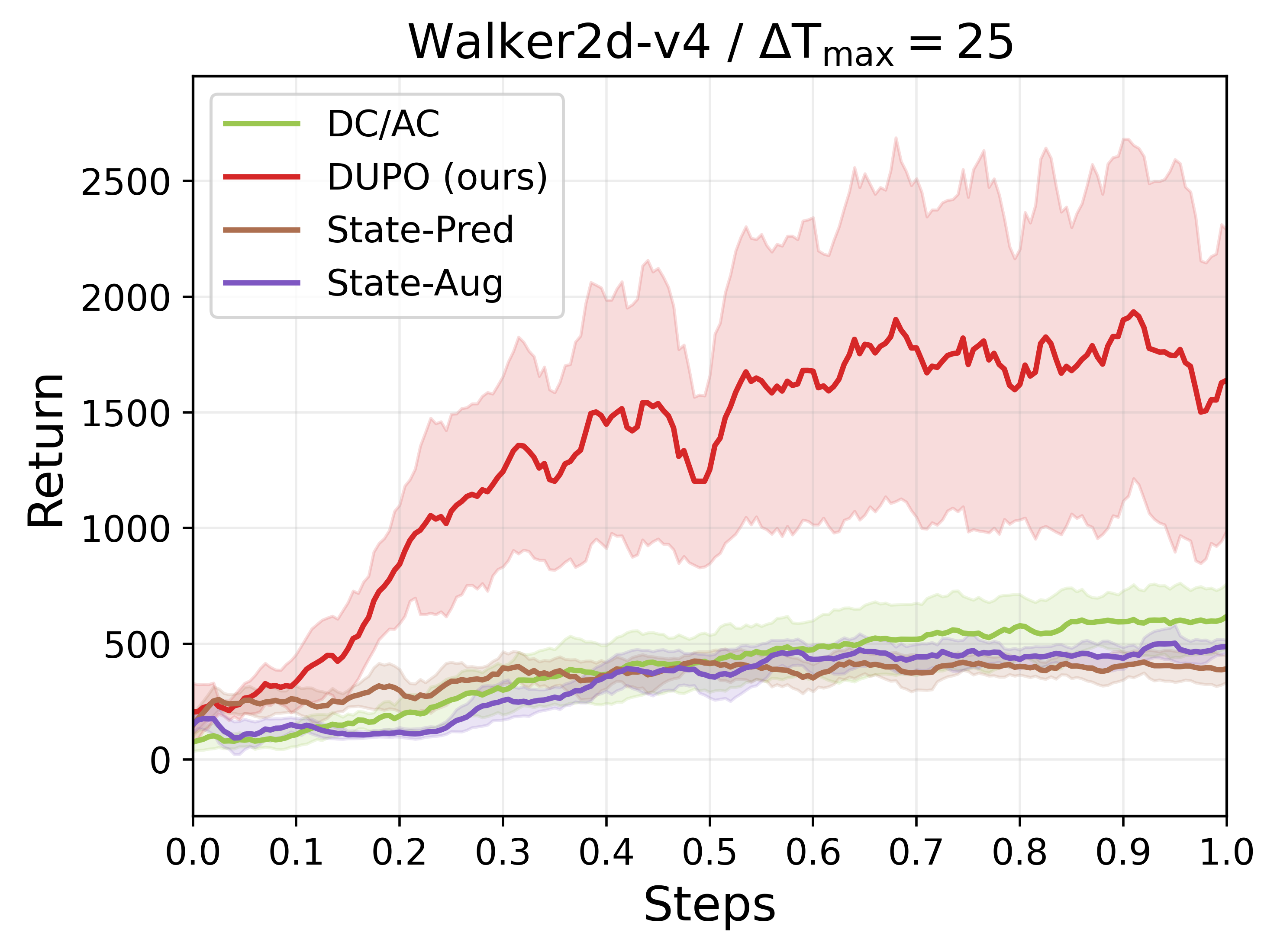}{e}
        \end{minipage}\hgapB
        
    \end{minipage}

    \caption{Learning curves on MuJoCo-v4 under long random observation delays ($\Delta T_{\max}=25$).
    Across five tasks, DUPO (ours) consistently achieves higher returns and faster convergence than
    baselines. Solid lines denote the mean over
    four seeds and shaded regions indicate $\pm$ one standard deviation.}
    
    \label{fig:delay25_fivepanel_legend}
\end{figure*}

\section{Experimental Results}
\subsection{Benchmarks and Baseline Algorithms}
We evaluate DUPO on deterministic continuous robotic locomotion control MuJoCo \citep{todorov2012mujoco} with random observation delays, including
\textsc{Ant-v4}, \textsc{HalfCheetah-v4}, \textsc{Hopper-v4}, \textsc{Walker2d-v4}, and \textsc{Swimmer-v4}.
We focus on stochastic MDPs that better reflect real-world control systems, in which delays induce state transition
discrepancies and amplify uncertainties. We therefore convert deterministic MuJoCo environments into
stochastic MDPs by adding Gaussian noise $\mathcal N(0,0.1)$ to the environment-executed actions. After constructing stochastic MuJoCo environments, we further introduce random observation delays.
Let $\Delta T$ denote the delay at time $t$, drawn from a uniform distribution where $\Delta T_{\max}$ specifies the maximum delay horizon.
Accordingly, the agent does not directly observe the current state $s_t$; instead, it receives a delayed
message $M_t$, which aggregates the delayed observation together with the associated action history. \\
\noindent The following algorithms were included as baselines in the experiments: DC/AC~\citep{Bouteiller2020RandomDelays}, State Augmentation~\citep{wang2024addressing}, and State Prediction~\citep{wang2024addressing}. DC/AC is a SAC variant that integrates off-policy multi-step value estimation with partial trajectory resampling, yielding strong sample efficiency under random delays. State augmentation concatenates delayed observations with the aligned action history to build a time-calibrated input for standard actor--critic learning. State prediction trains an auxiliary model to infer the delay-free state from the delayed message and
controls using the predicted state at decision time. Detailed experimental settings and hyperparameter configurations are provided in the supplementary material.

\subsection{Results and Discussion}
Table~\ref{tab:mujoco_v4_delay_norm} reports the final return after $1$M global environment steps, summarized by the mean $\mu$ and standard deviation
$\sigma$ over four random seeds.
Since raw returns have different scales across MuJoCo tasks, we normalize results \emph{within each fixed}
$(\Delta T_{\max}, \text{env})$ setting using the DC/AC baseline mean return $\mu_{\text{DC/AC}}$ as a scale factor: $
\tilde{\mu}=\mu/\mu_{\text{DC/AC}}, \qquad
\tilde{\sigma}=\sigma/\mu_{\text{DC/AC}}.
$
Consequently, DC/AC is always $1.00$ in every $(\Delta T_{\max}, \text{env})$ entry, and other numbers can be read as
a multiplicative ratio relative to DC/AC, enabling fair comparison across tasks and delay budgets.\\
\noindent Overall, DUPO achieves the best or near-best normalized mean in most environments, and its advantage becomes more
pronounced as the delay budget increases.
In particular, under severe random delays ($\Delta T_{\max}=25$), DUPO attains strong normalized performance on
HalfCheetah-v4, Hopper-v4, and Walker2d-v4, consistently outperforming
State-Pred and State-Aug, which tend to degrade substantially in the long-delay regime. We also observe that point-estimation baselines can be competitive in short-delay and simpler dynamics (e.g.,
Swimmer-v4 at $\Delta T_{\max}=5$), yet their gains diminish quickly as delays grow, whereas DUPO remains robust.
Figure~\ref{fig:delay25_fivepanel_legend} complements the final scores by illustrating learning dynamics under
$\Delta T_{\max}=25$.
Across five tasks, DUPO typically reaches high-return regimes faster and exhibits more stable training trajectories,
with reduced variance and fewer late-stage plateaus compared to State-Pred and State-Aug. More learning curves are provided in the supplementary material.
A representative case is Walker2d-v4, where long stochastic delays often induce inaccurate delayed-state estimates and
value bias for the baselines, limiting further improvement; in contrast, DUPO continues to improve and sustains higher
returns.
These results align with DUPO's design: modeling the distributional posterior from delayed messages and performing
uncertainty-aware weighting yields more reliable policy updates under multi-modal and high-uncertainty delay
conditions, leading to consistent gains especially in medium-to-long random delays.
\subsection{Ablation Study}
Figure~\ref{fig:curves_two_in_one_col} (left) studies the impact of the observation delay time distribution while keeping the same maximum delay budget $\Delta T_{\max}=10$ on HalfCheetah-v4.
We consider three commonly used delay models: a uniform distribution (Un), a truncated Gaussian distribution (Ga), and a discrete Poisson distribution (Po), all supported on $\{1,\dots,\Delta T_{\max}\}$.
Across these settings, DUPO exhibits consistently strong performance and stable learning dynamics, indicating that the proposed diffusion-based posterior modeling is not tied to a particular delay prior.
Notably, the Gaussian-delay setting yields the fastest improvement and the highest final return, while the uniform and Poisson variants remain competitive but converge more slowly, suggesting that the shape of the delay distribution mainly affects learning speed rather than DUPO's overall robustness.\\
\noindent Figure~\ref{fig:curves_two_in_one_col} (right) further ablates the posterior sampling mechanism used to represent the latent delay-free state.
Our full DUPO uses a diffusion model to produce \emph{multimodal} posterior samples, whereas MD-PO replaces diffusion sampling with Monte Carlo dropout sampling, and G-PO replaces it with a unimodal Gaussian sampler.
The results show a clear performance gap in favor of DUPO: diffusion-based multimodal sampling reaches high-return regimes earlier and plateaus at substantially higher performance than both alternatives.
This highlights the necessity of accurately capturing posterior \emph{multimodality} under stochastic delays; simplified unimodal (Gaussian) or approximate uncertainty sampling (MC dropout) tends to under-represent ambiguous delayed observations, leading to less reliable value estimation and a weaker delayed policy.
\begin{figure}[t]
    \centering
    \begin{minipage}[t]{0.49\columnwidth}
        \centering
        \includegraphics[width=\linewidth]{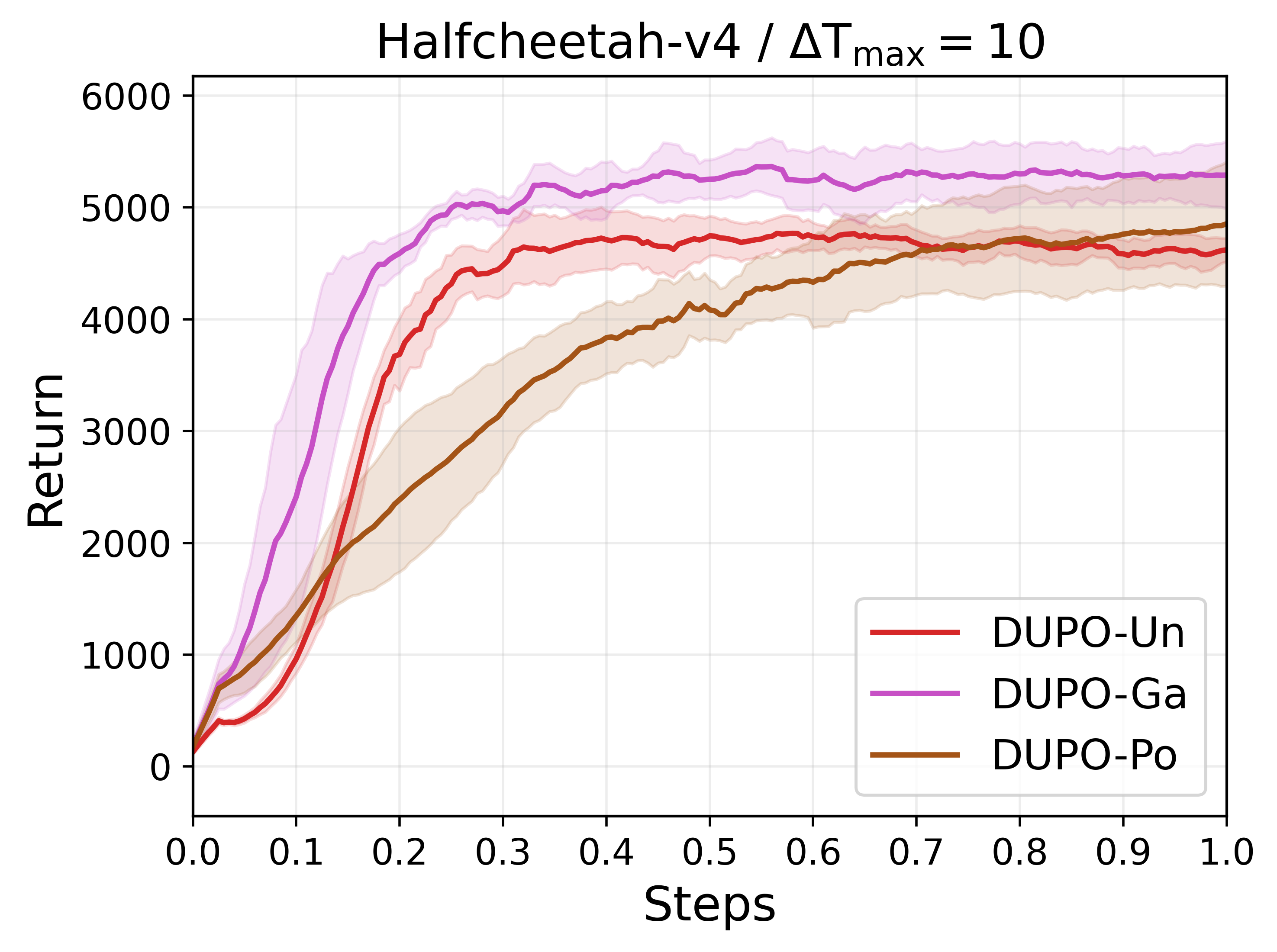}
    \end{minipage}\hfill
    \begin{minipage}[t]{0.49\columnwidth}
        \centering
        \includegraphics[width=\linewidth]{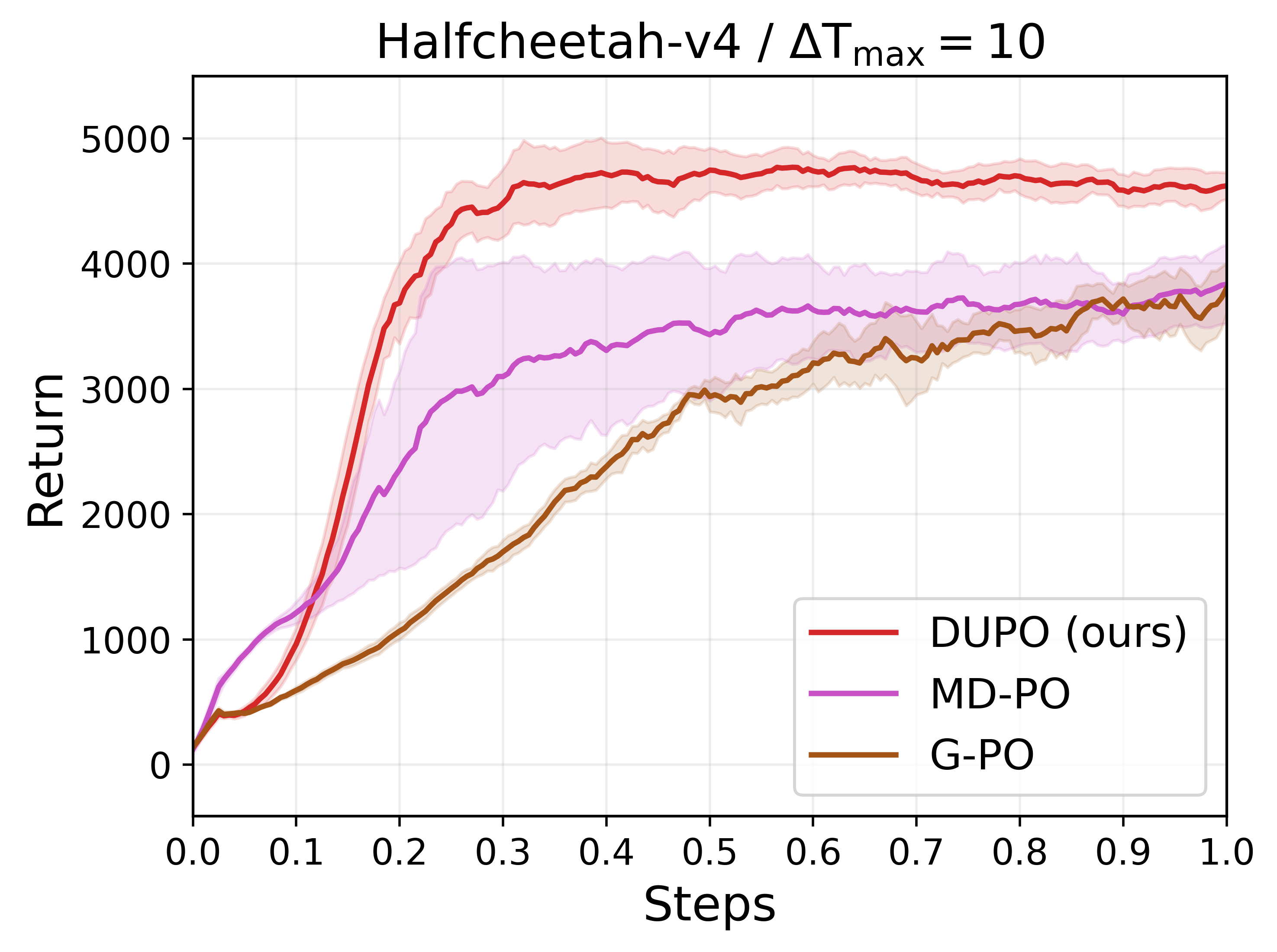}
    \end{minipage}
    \caption{Ablation studies.}
    \label{fig:curves_two_in_one_col}
\end{figure}
\noindent Table~\ref{tab:hopper_dt10_ablation} reports a Hyperparameter sensitivity study on Hopper-v4 with $\Delta T_{\max}=10$, where we normalize returns by the default configuration $(\beta{=}0.8,\,N{=}10)$.
The coefficient $\beta$ controls the uncertainty weight
$\omega(M_t,a)=\beta/(\widehat{\mathrm{Var}}[Q(M_t,a)]+\varepsilon)$, and $N$ denotes the number of posterior samples drawn from the diffusion model to approximate the multimodal latent state distribution.
Varying $\beta$ shows that performance is relatively stable around the default choice, while overly small or large values degrade normalized return, indicating that moderate uncertainty scaling yields the most reliable action weighting.
Increasing $N$ consistently improves performance and reduces variability, suggesting that richer posterior sampling provides a more accurate uncertainty estimate and hence a more effective delayed policy; in particular, larger $N$ achieves the best normalized return under the same delay budget. More details of the ablation studies can be found in the supplementary material.

\begin{table}[t]
\centering
\small
\caption{Hyperparameter sensitivity study on Hopper-v4 with random observation delays $\Delta T_{\max}=10$.}
\label{tab:hopper_dt10_ablation}
\begin{tabular}{c c c c}
\toprule
\multicolumn{2}{c}{Hopper-v4 $\Delta T_{\max}=10$} & \multicolumn{2}{c}{} \\
\cmidrule(lr){1-2}\cmidrule(lr){3-4}
$\beta$ & Return (normalized) & $N$ & Return (normalized) \\
\midrule
0.4 & $0.93\pm0.11$ & 5  & $0.89\pm0.08$ \\
0.8 & $1.00\pm0.03$ & 10 & $1.00\pm0.03$ \\
1.0 & $0.96\pm0.05$ & 15 & $1.02\pm0.02$ \\
1.5 & $1.03\pm0.08$ & 20 & $1.03\pm0.02$ \\
2.0 & $0.91\pm0.04$ & 25 & $1.06\pm0.03$ \\
\bottomrule
\end{tabular}
\end{table}

\section{Conclusion}
In this paper, we studied reinforcement learning under stochastic observation delays and theoretically analyzed how random delays in stochastic MDPs induce an intrinsic discrepancy between delayed observations and the true current state, which fundamentally limits approaches based on single-point state estimation. To address this challenge, we proposed Diffusion-Guided Uncertainty-Aware Delayed Policy Optimization (DUPO), which models the multimodal posterior distribution of the latent delay-free state using a conditional diffusion model and incorporates uncertainty-aware action weighting to stabilize policy updates under delayed state information. By explicitly accounting for delay-induced uncertainty and posterior multimodality, DUPO provides a principled solution to delayed decision-making in stochastic environments. Extensive experiments on continuous-control benchmarks with multiple random delay settings demonstrate that DUPO consistently outperforms strong baselines, with particularly pronounced advantages under medium-to-long delays, validating both the theoretical motivation and practical effectiveness of the proposed approach. To facilitate reproducibility, the code will be publicly released upon acceptance.

\clearpage
\bibliography{dupo_aaai2027}

\ifwithappendix
\onecolumn
\appendix
\section{Theoretic Analysis}
\label{app:theory}

\providecommand{\E}{\mathbb{E}}
\providecommand{\Cov}{\mathrm{Cov}}
\providecommand{\tr}{\mathrm{tr}}

\paragraph{Notation and regularity assumptions.}
All statements below are written for a realized delay value $d=\Delta T$.
We treat the realized delay as part of the delayed message; equivalently, if an implementation stores the delay separately, read $M_t$ below as the pair $(M_t,d)$.
This is natural because the action-history length in $M_t=(s_{t-d},a_{t-d:t-1})$ identifies $d$.
The sigma-field generated by the delayed message is denoted by
\[
\mathcal F_t := \sigma(M_t,d)
 = \sigma\!\left(s_{t-d},a_{t-d:t-1},d\right).
\]
Thus $s_{t-d}$ and the intervening actions are known under $\mathcal F_t$.
For $j=0,\ldots,d$, define
\[
S_j:=s_{t-d+j},\qquad
A_j:=a_{t-d+j},\qquad
\Sigma_j:=\Cov(S_j\mid\mathcal F_t),\qquad
v_j:=\tr(\Sigma_j).
\]
We use the following standard local conditions for stochastic delayed dynamics:
\begin{enumerate}
    \item The latent dynamics along the delayed window satisfy
    $S_{j+1}=f(S_j,A_j)+\xi_j$.
    \item The innovation noise is conditionally centered and conditionally orthogonal to the propagated state uncertainty:
    \[
    \E[\xi_j\mid\mathcal F_t]=0,\qquad
    \tr\!\left(\Cov(f(S_j,A_j),\xi_j\mid\mathcal F_t)\right)=0.
    \]
    \item The transition noise has non-vanishing conditional variance and the deterministic part does not contract conditional uncertainty below a factor $\kappa\ge1$:
    \[
    \tr\!\left(\Cov(\xi_j\mid\mathcal F_t)\right)\ge \nu,\qquad
    \tr\!\left(\Cov(f(S_j,A_j)\mid\mathcal F_t)\right)\ge \kappa\,\tr\!\left(\Cov(S_j\mid\mathcal F_t)\right),
    \]
    for some $\nu>0$.
\end{enumerate}
These assumptions formalize the setting in which stochastic transitions inject fresh uncertainty at every hidden step, while the delayed message cannot fully recover the intervening latent trajectory.

\subsection{Proof of Theorem~\ref{thm:irreducible_delay_discrepancy}}
\label{app:theory:thm1}

\begin{lemma}[Bayes risk under a delayed message]
\label{lem:bayes_risk_delayed}
For any measurable estimator $g(M_t)$,
\begin{equation}
\E\|s_t-g(M_t)\|_2^2
=\E\,\tr\!\big(\Sigma(M_t)\big)+\E\big\|\E[s_t\mid M_t]-g(M_t)\big\|_2^2
\ge \E\,\tr\!\big(\Sigma(M_t)\big).
\label{eq:app_thm1_bayesrisk}
\end{equation}
\end{lemma}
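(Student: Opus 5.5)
The plan is to prove the identity by a bias--variance decomposition around the conditional mean, and then drop the nonnegative second term. Here $M_t$ is read as generating $\mathcal F_t$, as fixed in the notation paragraph. Write $m(M_t):=\E[s_t\mid M_t]$, and assume $\E\|s_t\|_2^2<\infty$ so that all quantities are finite. The inequality is trivial if $\E\|g(M_t)\|_2^2=\infty$: then $\E\|s_t-g(M_t)\|_2^2\ge \tfrac12\E\|g(M_t)\|_2^2-\E\|s_t\|_2^2=\infty$. So we may assume $g(M_t)$ is square integrable.

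\emph{Step 1 (decomposition).} Expand
\[
s_t-g(M_t)=\big(s_t-m(M_t)\big)+\big(m(M_t)-g(M_t)\big).
\]
Squaring gives
\[
\|s_t-g\|_2^2=\|s_t-m\|_2^2+\|m-g\|_2^2+2\langle s_t-m,\;m-g\rangle.
\]

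\emph{Step 2 (cross term).} Take conditional expectations given $M_t$. The vector $m(M_t)-g(M_t)$ is $\sigma(M_t)$-measurable, so it can be pulled out:
\[
\E\big[\langle s_t-m,\;m-g\rangle\mid M_t\big]=\big\langle \E[s_t\mid M_t]-m,\;m-g\big\rangle=0 .
\]
Integrability holds by Cauchy--Schwarz, since both factors are square integrable. By the tower property, the cross term therefore has zero expectation.

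\emph{Step 3 (identify the trace).} By definition of the conditional covariance,
\[
\E\big[\|s_t-m(M_t)\|_2^2\mid M_t\big]=\tr\,\E\big[(s_t-m)(s_t-m)^\top\mid M_t\big]=\tr\,\Sigma(M_t).
\]
Taking expectations yields the equality in \eqref{eq:app_thm1_bayesrisk}. The inequality then follows because $\E\|m(M_t)-g(M_t)\|_2^2\ge0$.

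There is no real obstacle here. The only care needed is measurability and integrability, namely making $g$ measurable with respect to $\mathcal F_t$, including the realized delay $d$, so that the pull-out property applies.
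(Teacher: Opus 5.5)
Your proposal is correct and follows essentially the same route as the paper: decompose $s_t-g(M_t)$ around $\mathbb{E}[s_t\mid M_t]$, kill the cross term by pulling out the $\sigma(M_t)$-measurable factor, and identify $\mathbb{E}\big[\|s_t-\mathbb{E}[s_t\mid M_t]\|_2^2\mid M_t\big]$ with $\mathrm{tr}\,\Sigma(M_t)$ before taking expectations. Your integrability bookkeeping (assuming $\mathbb{E}\|s_t\|_2^2<\infty$ and disposing of the case $\mathbb{E}\|g(M_t)\|_2^2=\infty$, where in fact both sides of the identity equal $+\infty$) is a careful addition that the paper leaves implicit.
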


\begin{proof}
Let $\mathcal F=\sigma(M_t)$, $S=s_t$, and $\mu=\E[S\mid\mathcal F]$.
Since $g(M_t)$ is $\mathcal F$-measurable,
\[
S-g(M_t)=(S-\mu)+(\mu-g(M_t)).
\]
Expanding the squared norm and conditioning on $\mathcal F$ yields
\begin{align}
\E\!\left[\|S-g(M_t)\|_2^2\mid\mathcal F\right]
&=\E\!\left[\|S-\mu\|_2^2\mid\mathcal F\right]
  +\|\mu-g(M_t)\|_2^2 \notag\\
&\quad
  +2\,\E[S-\mu\mid\mathcal F]^\top(\mu-g(M_t)).
\end{align}
The cross term is zero because $\E[S-\mu\mid\mathcal F]=0$. Moreover,
\[
\E\!\left[\|S-\mu\|_2^2\mid\mathcal F\right]
=\tr\!\left(\E[(S-\mu)(S-\mu)^\top\mid\mathcal F]\right)
=\tr(\Cov(S\mid\mathcal F)).
\]
Taking expectations gives \eqref{eq:app_thm1_bayesrisk}.
\end{proof}

\begin{lemma}[Delay-amplified conditional variance]
\label{lem:delay_variance_recursion}
For a realized delay $d=\Delta T$,
\begin{equation}
\tr\!\big(\Cov(s_t\mid M_t,d)\big)
\ge
\nu\sum_{i=0}^{d-1}\kappa^i
\ge
\nu d.
\label{eq:app_variance_chain}
\end{equation}
\end{lemma}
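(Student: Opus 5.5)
The plan is to prove Lemma~\ref{lem:delay_variance_recursion} by a one-step variance recursion along the hidden window $S_0,\dots,S_d$, using the three regularity assumptions stated above. All covariances are conditional on $\mathcal F_t$. The anchor of the recursion is that $S_0=s_{t-d}$ is $\mathcal F_t$-measurable, so $v_0=\tr(\Sigma_0)=0$. The target quantity is $v_d=\tr(\Cov(s_t\mid M_t,d))$.

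For the recursive step, fix $j\in\{0,\dots,d-1\}$ and write $S_{j+1}=f(S_j,A_j)+\xi_j$ (assumption 1). Expanding the conditional covariance of a sum gives
\[
\Cov(S_{j+1}\mid\mathcal F_t)=\Cov(f(S_j,A_j)\mid\mathcal F_t)+\Cov(\xi_j\mid\mathcal F_t)+C+C^\top ,
\]
where $C=\Cov(f(S_j,A_j),\xi_j\mid\mathcal F_t)$. Taking traces, the cross terms vanish by the orthogonality part of assumption 2, since $\tr(C^\top)=\tr(C)=0$. Assumption 3 then bounds each remaining term, which yields
\[
v_{j+1}\ \ge\ \kappa\, v_j+\nu .
\]

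The third step unrolls this recursion. By induction from $v_0=0$,
\[
v_d\ \ge\ \nu\sum_{i=0}^{d-1}\kappa^{i}.
\]
The base case is $v_1\ge\nu$. For the inductive step, $v_{j+1}\ge\kappa\nu\sum_{i<j}\kappa^i+\nu=\nu\sum_{i\le j}\kappa^i$. Since $\kappa\ge1$, each summand satisfies $\kappa^i\ge1$, so the sum is at least $d$, giving the second inequality $\ge\nu d$. The monotonicity used in the unrolling ($v_j$ larger implies $\kappa v_j$ larger) is trivial because $\kappa>0$.

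The main obstacle is not the algebra but justifying the conditioning. The actions $A_j$ for $j<d$ are part of $M_t$, hence $\mathcal F_t$-measurable. Assumptions 2 and 3 are phrased directly with respect to $\mathcal F_t$, so the proof simply applies them as stated. It does not derive them from properties of the unconditional noise; one would otherwise have to worry that conditioning on future actions, which a policy may choose based on information correlated with $\xi_j$, could break independence.

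Finally, I would remark that combining this lemma with Lemma~\ref{lem:bayes_risk_delayed} and taking expectations gives the chain \eqref{eq:irreducible_chain} in Theorem~\ref{thm:irreducible_delay_discrepancy}. The same chain holds averaged over $d\sim\mathcal D$.
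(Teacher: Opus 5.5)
Your proposal is correct and follows essentially the same route as the paper's proof. Both start from $v_0=0$, derive $v_{j+1}\ge\kappa v_j+\nu$ by expanding the conditional covariance of $f(S_j,A_j)+\xi_j$ and using Assumption 2 to kill the trace of the cross-covariance term, then unroll the recursion and use $\kappa\ge1$ for the final bound $\nu d$.
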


\begin{proof}
Because $S_0=s_{t-d}$ is contained in the delayed message, it is $\mathcal F_t$-measurable; hence $v_0=\tr(\Cov(S_0\mid\mathcal F_t))=0$.
For $j=0,\ldots,d-1$, let $X_j=f(S_j,A_j)$.
Using $S_{j+1}=X_j+\xi_j$ and the conditional covariance identity,
\begin{align}
v_{j+1}
&=\tr\!\left(\Cov(X_j+\xi_j\mid\mathcal F_t)\right) \notag\\
&=\tr\!\left(\Cov(X_j\mid\mathcal F_t)\right)
 +\tr\!\left(\Cov(\xi_j\mid\mathcal F_t)\right)
 +2\,\tr\!\left(\Cov(X_j,\xi_j\mid\mathcal F_t)\right) \notag\\
&\ge \kappa v_j+\nu.
\label{eq:app_thm1_rec}
\end{align}
Unrolling the recursion from $v_0=0$ gives
\[
v_d\ge \nu(1+\kappa+\cdots+\kappa^{d-1})
=\nu\sum_{i=0}^{d-1}\kappa^i.
\]
Since $\kappa\ge1$, the final term is at least $\nu d$.
Finally, $S_d=s_t$, which proves \eqref{eq:app_variance_chain}.
\end{proof}

\begin{proof}[Proof of Theorem~\ref{thm:irreducible_delay_discrepancy}]
Lemma~\ref{lem:bayes_risk_delayed} gives the irreducible estimation-risk term:
\begin{equation}
\E\|s_t-g(M_t)\|_2^2
\ge
\E\,\tr\!\big(\Sigma(M_t)\big).
\end{equation}
It remains to lower bound the posterior variance. Since $d$ is included in the delayed message notation, $\Sigma(M_t)=\Cov(s_t\mid M_t,d)$.
Lemma~\ref{lem:delay_variance_recursion} lower bounds this conditional covariance for every realized delay $d$. Hence, conditional on $\Delta T=d$,
\[
\E\,\tr\!\big(\Sigma(M_t)\big)
\ge
\nu\sum_{i=0}^{d-1}\kappa^i
\ge
\nu d.
\]
Replacing $d$ by the realized value $\Delta T$ gives \eqref{eq:irreducible_chain}; averaging over a random delay distribution yields the corresponding unconditional lower bound
$\nu\,\E[\sum_{i=0}^{\Delta T-1}\kappa^i]\ge\nu\,\E[\Delta T]$.
\end{proof}

\subsection{Proof of Theorem~\ref{thm:point_estimate_short}}
\label{app:theory:thm2}

We fix any point estimator $\hat s_t=g(M_t)$ and the induced greedy policy
$\pi_g$ defined by $a_t=a^\star(g(M_t))$, where $a^\star(s):=\arg\max_a Q^\star(s,a)$
(assumed unique) and $V^\star(s):=\max_a Q^\star(s,a)$.
Let the discounted occupancy over $(s,M)$ under a policy $\pi$ be
\[
d^\pi(s,M):=(1-\gamma)\sum_{t=0}^{\infty}\gamma^t\Pr_\pi(s_t=s,M_t=M).
\]

\begin{lemma}[Quadratic growth from $\mu$-strong concavity]
\label{lem:app_qg}
Suppose $Q^\star(s,\cdot)$ is differentiable and $\mu$-strongly concave in $a$, and that $a^\star(s)$ is an unconstrained or interior maximizer. Then, for any $a$,
\begin{equation}
V^\star(s)-Q^\star(s,a)\ \ge\ \frac{\mu}{2}\|a-a^\star(s)\|_2^2.
\label{eq:app_qg}
\end{equation}
\end{lemma}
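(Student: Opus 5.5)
The plan is to apply the standard quadratic-growth argument for strongly concave functions to $h(a):=Q^\star(s,a)$ with $s$ fixed. By the $\mu$-strong concavity hypothesis, for any two actions $a,b$ in the (convex) action domain we have
\[
h(a)\ \le\ h(b)+\nabla h(b)^\top(a-b)-\frac{\mu}{2}\|a-b\|_2^2 .
\]
Concretely, $h(\cdot)+\frac{\mu}{2}\|\cdot\|_2^2$ is concave, and the first-order characterization of concavity for differentiable functions, rearranged, gives this display. I would invoke this with $b=a^\star(s)$.

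The second step uses the first-order optimality condition at the maximizer. Since $a^\star(s)$ is assumed to be an unconstrained or interior maximizer of the differentiable function $h$, we have $\nabla h(a^\star(s))=0$. Substituting this into the inequality kills the linear term and leaves
\[
Q^\star(s,a)\ \le\ Q^\star(s,a^\star(s))-\frac{\mu}{2}\|a-a^\star(s)\|_2^2 .
\]
Finally, $V^\star(s)=\max_a Q^\star(s,a)=Q^\star(s,a^\star(s))$, since the maximizer is unique and attained. Rearranging the last display gives \eqref{eq:app_qg}.

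The only delicate point is the interiority hypothesis. Without it, one would instead have to use the variational inequality $\nabla h(a^\star)^\top(a-a^\star)\le 0$ for all feasible $a$; this still makes the linear term nonpositive, so the same bound would go through on a convex action set. I would remark on this, noting that the lemma as stated, with an interior maximizer, needs only the gradient-vanishing case, so there is no real obstacle here.
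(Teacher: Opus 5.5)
Your proposal is correct and follows the paper's proof essentially step for step: the first-order strong-concavity inequality at $b=a^\star(s)$, then the vanishing gradient at the interior maximizer, then $V^\star(s)=Q^\star(s,a^\star(s))$. You attribute $\nabla_a Q^\star(s,a^\star)=0$ to interiority, which is the right reason (the paper cites uniqueness instead), and your variational-inequality remark correctly extends the bound to a boundary maximizer on a convex action set.
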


\begin{proof}
By $\mu$-strong concavity, for any $a$ and $a^\star=a^\star(s)$,
\[
Q^\star(s,a)\le Q^\star(s,a^\star)+\langle \nabla_a Q^\star(s,a^\star),a-a^\star\rangle
-\frac{\mu}{2}\|a-a^\star\|_2^2.
\]
Since $a^\star$ is the unique maximizer, $\nabla_a Q^\star(s,a^\star)=0$, yielding \eqref{eq:app_qg}.
\end{proof}

\begin{lemma}[One-step loss under inverse-Lipschitz greedy map]
\label{lem:app_onestep}
Assume that the greedy action map is locally inverse-Lipschitz on the relevant state region:
\[
\|a^\star(s')-a^\star(s)\|_2\ge L_-\|s'-s\|_2
\qquad\text{for all relevant }s,s'.
\]
Then, for any $(s,M)$ and any point estimator $g(M)$ in this region,
\begin{equation}
V^\star(s)-Q^\star\bigl(s,a^\star(g(M))\bigr)\ \ge\ \frac{\mu L_-^2}{2}\|s-g(M)\|_2^2.
\label{eq:app_onestep}
\end{equation}
\end{lemma}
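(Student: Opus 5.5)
The bound follows by chaining the quadratic-growth estimate of Lemma~\ref{lem:app_qg} with the inverse-Lipschitz assumption. Fix a pair $(s,M)$ in the relevant region and write $a=a^\star(g(M))$. Lemma~\ref{lem:app_qg} applies with this particular action, because $Q^\star(s,\cdot)$ is $\mu$-strongly concave and $a^\star(s)$ is an interior or unconstrained maximizer. It gives
\[
V^\star(s)-Q^\star\bigl(s,a^\star(g(M))\bigr)\ \ge\ \frac{\mu}{2}\bigl\|a^\star(g(M))-a^\star(s)\bigr\|_2^2 .
\]

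Next, I bound the action gap from below by the state gap. Since $g(M)$ lies in the relevant state region, the inverse-Lipschitz condition can be applied with $s'=g(M)$:
\[
\|a^\star(g(M))-a^\star(s)\|_2\ge L_-\|g(M)-s\|_2 .
\]
Both sides are nonnegative, so squaring preserves the inequality. Substituting the squared bound into the previous display yields $\frac{\mu L_-^2}{2}\|s-g(M)\|_2^2$, which is \eqref{eq:app_onestep}.

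The only delicate step is making sure the hypotheses cover the pair actually used. Specifically, both $s$ and $g(M)$ must lie in the region where the inverse-Lipschitz property holds. This is exactly the sense of ``relevant'' in the lemma's statement, so I would state that assumption explicitly rather than prove it. Uniqueness of the maximizer is already assumed in the setup of this subsection, and it guarantees $\nabla_a Q^\star(s,a^\star(s))=0$, which Lemma~\ref{lem:app_qg} relies on. No averaging over the occupancy measure $d^\pi$ is needed at this stage; that enters only later, when the one-step loss is combined with the performance-difference lemma and Theorem~\ref{thm:irreducible_delay_discrepancy} to prove Theorem~\ref{thm:point_estimate_short}.
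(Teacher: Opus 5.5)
Your proof is correct and follows the paper's argument: apply Lemma~\ref{lem:app_qg} with $a=a^\star(g(M))$, then lower-bound the squared action gap using the inverse-Lipschitz condition with $s'=g(M)$. Your remarks on squaring nonnegative quantities and on requiring both $s$ and $g(M)$ to lie in the relevant region make explicit what the paper's two-line proof leaves implicit.
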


\begin{proof}
Apply Lemma~\ref{lem:app_qg} with $a=a^\star(g(M))$:
\[
V^\star(s)-Q^\star\bigl(s,a^\star(g(M))\bigr)\ge \frac{\mu}{2}\|a^\star(g(M))-a^\star(s)\|_2^2,
\]
and then use the assumed inverse-Lipschitz property to lower bound the right-hand side by
$\frac{\mu L_-^2}{2}\|g(M)-s\|_2^2$.
\end{proof}

\begin{lemma}[Performance difference with $Q^\star$]
\label{lem:app_pdl}
For any policy $\pi$ (possibly history-dependent) with the same initial distribution as $\pi^\star$,
\begin{equation}
J(\pi^\star)-J(\pi)
=\frac{1}{1-\gamma}\;
\E_{(s,M)\sim d^\pi}\Big[V^\star(s)-Q^\star(s,a)\Big],
\label{eq:app_pdl}
\end{equation}
where $a$ is the action executed by $\pi$ upon observing $M$.
\end{lemma}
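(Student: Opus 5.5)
The plan is to prove Lemma~\ref{lem:app_pdl} by the standard telescoping argument behind the performance difference lemma, written in terms of $V^\star$ and $Q^\star$. Because $\pi$ may depend on the history, I will keep it as a general non-Markov policy acting on the joint process $(s_t,M_t)$. The key fact is that $V^\star$ and $Q^\star$ are functions of the true latent state only, and they satisfy the Bellman identity
\[
Q^\star(s,a)=r(s,a)+\gamma\,\E_{s'\sim T(\cdot\mid s,a)}\big[V^\star(s')\big].
\]
This identity holds whatever information the acting policy used to choose $a$, since the transition depends only on $(s,a)$.

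First, I write
\[
J(\pi^\star)=\E_{s_0\sim d_0}[V^\star(s_0)],
\]
using the common initial distribution and the fact that $\pi^\star$ is optimal with value $V^\star$. Second, I expand
\[
J(\pi)=\E_\pi\Big[\sum_t\gamma^t r(s_t,a_t)\Big]
\]
along trajectories generated by $\pi$. I then add and subtract the telescoping sum $\sum_t\big(\gamma^{t+1}V^\star(s_{t+1})-\gamma^t V^\star(s_t)\big)$, which equals $-V^\star(s_0)$ under bounded rewards and $\gamma<1$. This gives
\[
J(\pi^\star)-J(\pi)=\E_\pi\Big[\sum_t\gamma^t\big(V^\star(s_t)-r(s_t,a_t)-\gamma V^\star(s_{t+1})\big)\Big].
\]
Third, I condition each summand on the full history up to time $t$, including $s_t$, $M_t$ and $a_t$. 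By the Markov property of the latent dynamics, $\E[V^\star(s_{t+1})\mid\text{history},a_t]=\E_{s'\sim T(\cdot\mid s_t,a_t)}V^\star(s')$. So each summand becomes $V^\star(s_t)-Q^\star(s_t,a_t)$, and the tower property applies.

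Finally, I rewrite $\sum_t\gamma^t\,\E[h(s_t,M_t,a_t)]$ through the discounted occupancy $d^\pi(s,M)$ and the conditional law of $a$ given $(s,M)$ under $\pi$. By definition of $d^\pi$ this contributes the factor $1/(1-\gamma)$, and it yields exactly \eqref{eq:app_pdl}. Interchanging the sum and the expectation is justified by Fubini, since $V^\star-Q^\star\ge0$, or by boundedness.

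The main subtlety is the third step. One has to check that the advantage identity does not require $\pi$ to be Markov in $s_t$. This needs care because $a_t$ is chosen from $M_t$ or the history, not from $s_t$. The fix is to condition on the whole history, including the latent $s_t$, and to use that $s_{t+1}$ depends only on $(s_t,a_t)$. A related point is that if $\pi$ uses extra history beyond $M_t$, then "the action executed upon observing $M$" has to be read as averaged under $\pi$ given $(s,M)$. I would state the lemma in that averaged sense, which is all that Theorem~\ref{thm:point_estimate_short} needs.
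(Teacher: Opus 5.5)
Your proposal is correct and follows the paper's proof essentially step for step. Both use the Bellman identity $Q^\star(s_t,a_t)=\mathbb{E}[r_t+\gamma V^\star(s_{t+1})\mid s_t,a_t]$ to turn each term into $V^\star(s_t)-r_t-\gamma V^\star(s_{t+1})$, telescope the discounted sum to $\mathbb{E}[V^\star(s_0)]-J(\pi)=J(\pi^\star)-J(\pi)$, and rewrite through $d^\pi$ to get the $1/(1-\gamma)$ factor. Your extra care---conditioning on the full history including the latent $s_t$, and reading $a$ under the joint discounted occupancy of $(s,M,a)$ when $\pi$ uses more than $M_t$---makes explicit what the paper leaves implicit without changing the route.
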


\begin{proof}
For the action $a_t$ executed by $\pi$, the Bellman optimality identity gives
\[
Q^\star(s_t,a_t)=\E[r_t+\gamma V^\star(s_{t+1})\mid s_t,a_t].
\]
Therefore,
\[
\E[V^\star(s_t)-Q^\star(s_t,a_t)]
=
\E[V^\star(s_t)-r_t-\gamma V^\star(s_{t+1})].
\]
Multiplying by $\gamma^t$, summing over $t\ge0$, and using telescoping gives
\[
\sum_{t=0}^{\infty}\gamma^t\E[V^\star(s_t)-Q^\star(s_t,a_t)]
=
\E[V^\star(s_0)]-J(\pi).
\]
Because $\pi^\star$ is optimal, $\E[V^\star(s_0)]=J(\pi^\star)$.
Rewriting the left side through the discounted occupancy measure $d^\pi$ proves \eqref{eq:app_pdl}.
\end{proof}

\begin{proof}[Proof of Theorem~\ref{thm:point_estimate_short}]
Applying Lemma~\ref{lem:app_pdl} to $\pi=\pi_g$ and then Lemma~\ref{lem:app_onestep} gives
\begin{equation}
J(\pi^\star)-J(\pi_g)
\ge \frac{\mu L_-^2}{2(1-\gamma)}\;
\E_{(s,M)\sim d^{\pi_g}}\big[\|s-g(M)\|_2^2\big].
\label{eq:app_gap_mse}
\end{equation}
The Bayes-risk decomposition in Lemma~\ref{lem:bayes_risk_delayed}, applied under the occupancy distribution of $\pi_g$, implies
\[
\E_{(s,M)\sim d^{\pi_g}}\big[\|s-g(M)\|_2^2\big]
\ge
\E_{(s,M)\sim d^{\pi_g}}\big[\tr(\Sigma(M))\big].
\]
Under the same delay-variance regularity conditions used in Theorem~\ref{thm:irreducible_delay_discrepancy}, Lemma~\ref{lem:delay_variance_recursion} gives, for realized delay $d=\Delta T$,
\[
\E_{(s,M)\sim d^{\pi_g}}\big[\tr(\Sigma(M))\big]
\ge \nu\sum_{i=0}^{d-1}\kappa^i
\ge \nu d.
\]
Substituting this lower bound into \eqref{eq:app_gap_mse} yields
\[
J(\pi^\star)-J(\pi_g)\ \ge\ \frac{\mu L_-^2\,\nu}{2(1-\gamma)}\,\Delta T,
\]
which proves the claim.
\end{proof}

\section{Implementation Details}
\label{app:impl}

\subsection{Benchmark and Baseline Setting}
\label{subsec:benchmark_baseline}

We evaluate on MuJoCo-v4 continuous-control benchmarks (Ant, HalfCheetah, Hopper, Walker2d, and Swimmer) under random observation delays.
To emulate stochastic control systems, we turn each originally deterministic environment into a stochastic MDP by adding Gaussian noise to the environment-executed actions:
the simulator applies $\tilde a_t = a_t + \epsilon_t$ with $\epsilon_t \sim \mathcal{N}(0,0.1)$.
The observation delay at time $t$ is sampled as an integer-valued random variable
$\Delta T_t \sim \mathrm{Unif}\{1,\ldots,\Delta T_{\max}\}$, and the agent receives a delayed message
$M_t$ that aggregates the delayed observation together with the associated action history, e.g.,
$M_t=(s_{t-\Delta T_t}, a_{t-\Delta T_t:t-1})$.
For each baseline, we implement the method following its original paper and adopt the best-performing hyperparameters recommended by the authors whenever available; when multiple configurations are reported, we select the one that achieves the strongest final performance under the same delay budget.
All baselines are trained with the identical environment construction, interaction budget, and evaluation protocol as DUPO.
We further conduct ablations by varying only the targeted components while keeping the remaining training pipeline unchanged.
\subsection{Ablation Study}
\label{subsec:ablation}

Figure~\ref{fig:curves_two_in_one_col} investigates two factors while keeping the remaining training pipeline unchanged.
In the left panel, we study the effect of the observation-delay time distribution on HalfCheetah-v4 with a fixed maximum delay budget $\Delta T_{\max}=10$.
At each time step, the delay length is an integer random variable $\Delta T_t \in \{1,\ldots,\Delta T_{\max}\}$, and the agent observes a delayed message constructed from the delayed state and the corresponding action history.
We consider three commonly used discrete delay models on this support: an integer-valued uniform distribution (Un), a truncated discrete Gaussian distribution (Ga), and a discrete Poisson distribution truncated to $\{1,\ldots,\Delta T_{\max}\}$ (Po).
Across all three priors, DUPO achieves consistently strong performance and stable learning dynamics, suggesting that diffusion-based posterior modeling is not tied to a particular delay prior.
The Gaussian-delay setting improves fastest and attains the best final return, while the uniform and Poisson variants remain competitive but converge more slowly, indicating that the delay distribution primarily affects learning speed rather than overall robustness.

The right panel ablates how we obtain multiple samples of the latent delay-free state for uncertainty estimation.
Our full DUPO draws \emph{multimodal} posterior samples using the diffusion model.
In contrast, MD-PO (Monte Carlo Dropout Posterior) replaces diffusion sampling with dropout-based sampling: we keep dropout active at inference time and perform multiple stochastic forward passes to generate a set of predicted states.
G-PO (Gaussian Posterior) replaces diffusion sampling with a unimodal Gaussian sampler: we fit (or assume) a single Gaussian distribution over the predicted state and draw multiple state samples from this Gaussian.
Both variants keep the downstream uncertainty-aware weighting and policy optimization unchanged, differing only in the state-sampling mechanism.
The results show a clear advantage for DUPO: diffusion-based multimodal sampling reaches high-return regimes earlier and plateaus at substantially higher performance than MD-PO and G-PO.
This highlights that accurately capturing posterior multimodality under stochastic delays is crucial; unimodal Gaussian sampling or approximate uncertainty sampling via MC dropout tends to under-represent ambiguous delayed observations, leading to less reliable value estimation and weaker delayed policies.

\subsection{Hyperparameters setting}
\begin{table*}[t]
\centering
\small
\caption{Key hyperparameters.}
\label{tab:hparams_key_value}
\setlength{\tabcolsep}{12pt}
\begin{tabular}{l l}
\toprule
\textbf{Key} & \textbf{Value} \\
\midrule
burnin\_num & \texttt{0.25} \\
actor\_history\_merge\_method & \texttt{mlp} \\
batch\_size & \texttt{32} \\
batch\_seq\_len & \texttt{64} \\
max\_epoch & \texttt{200} \\
step\_per\_epoch & \texttt{5000} \\
step\_per\_collect & \texttt{1} \\
update\_per\_step & \texttt{1} \\
gamma & \texttt{0.99} \\
tau & \texttt{0.005} \\
buffer\_size & \texttt{1200000} \\
env\_max\_step & \texttt{5000} \\
\midrule
\multicolumn{2}{l}{\textbf{DUPO-specific}} \\
Diffusion\_num\_samples & \texttt{4} \\
Diffusion\_sample\_chunk\_size & \texttt{2048} \\
Diffusion\_update\_interval & \texttt{30} \\
$\beta$ & \texttt{0.8} \\
$N$ (multimodal sampling times) & \texttt{10} \\
\bottomrule
\end{tabular}
\end{table*}

\section{Learning Curves}
\label{app:curves}

\begin{figure*}[!b]
    \centering
    \setlength{\tabcolsep}{1.2pt}

    \begin{tabular}{@{}ccc@{}}
        \includegraphics[width=0.30\textwidth]{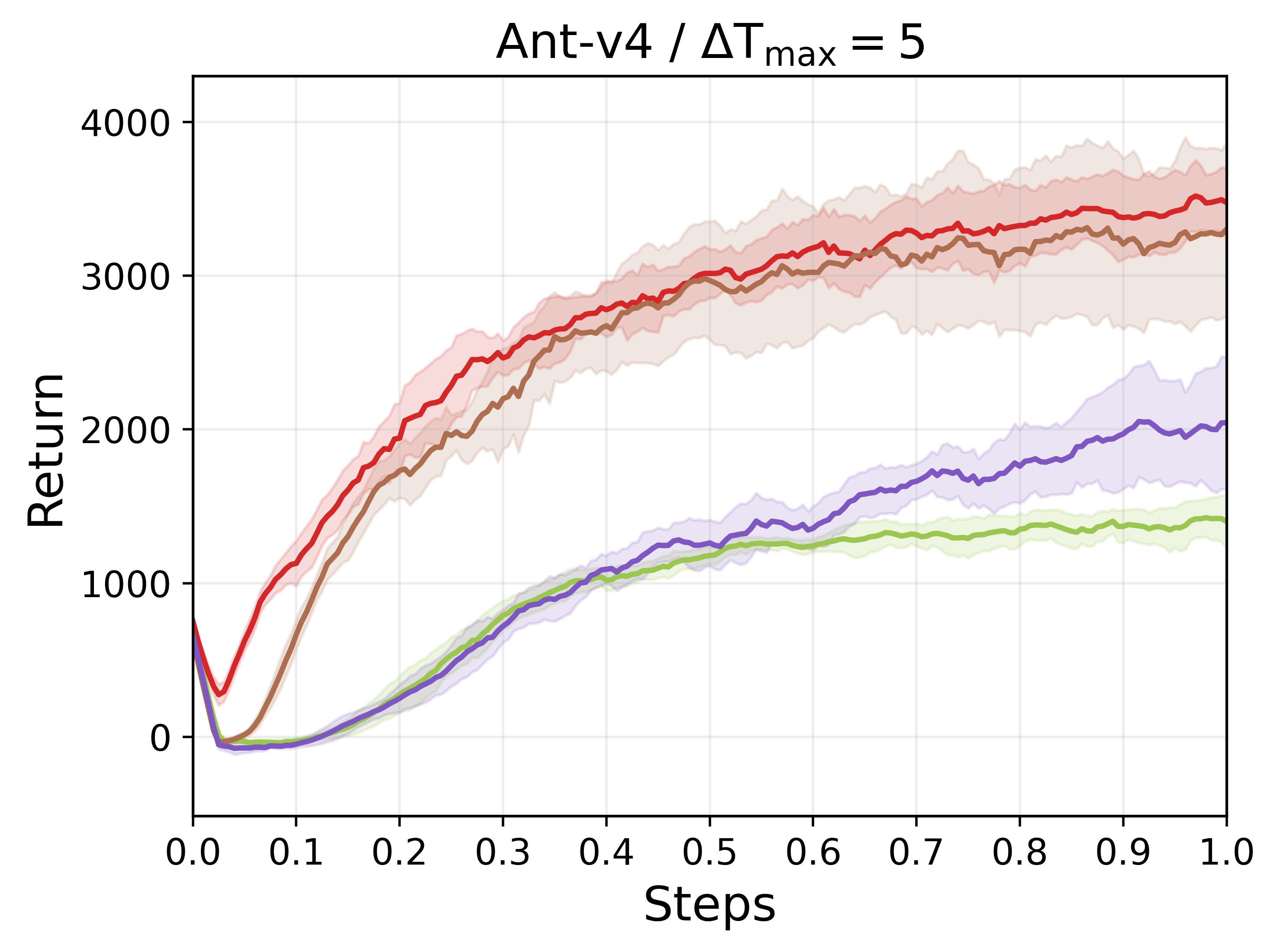} &
        \includegraphics[width=0.30\textwidth]{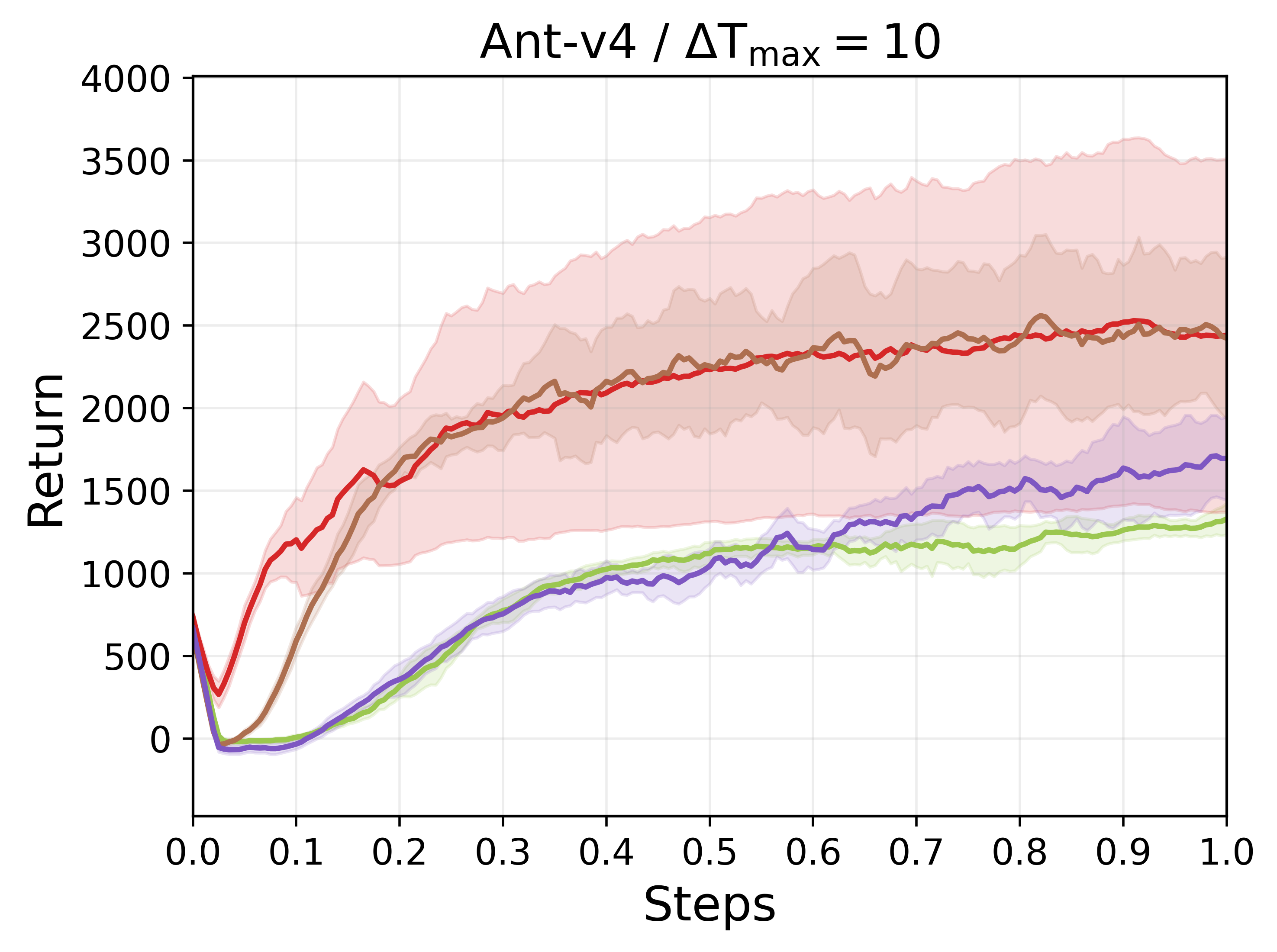} &
        \includegraphics[width=0.30\textwidth]{png/per_project_png/Ant-v4-25_randomdelay_25_ant_20260118_173939.png} \\[-1.4mm]

        \includegraphics[width=0.30\textwidth]{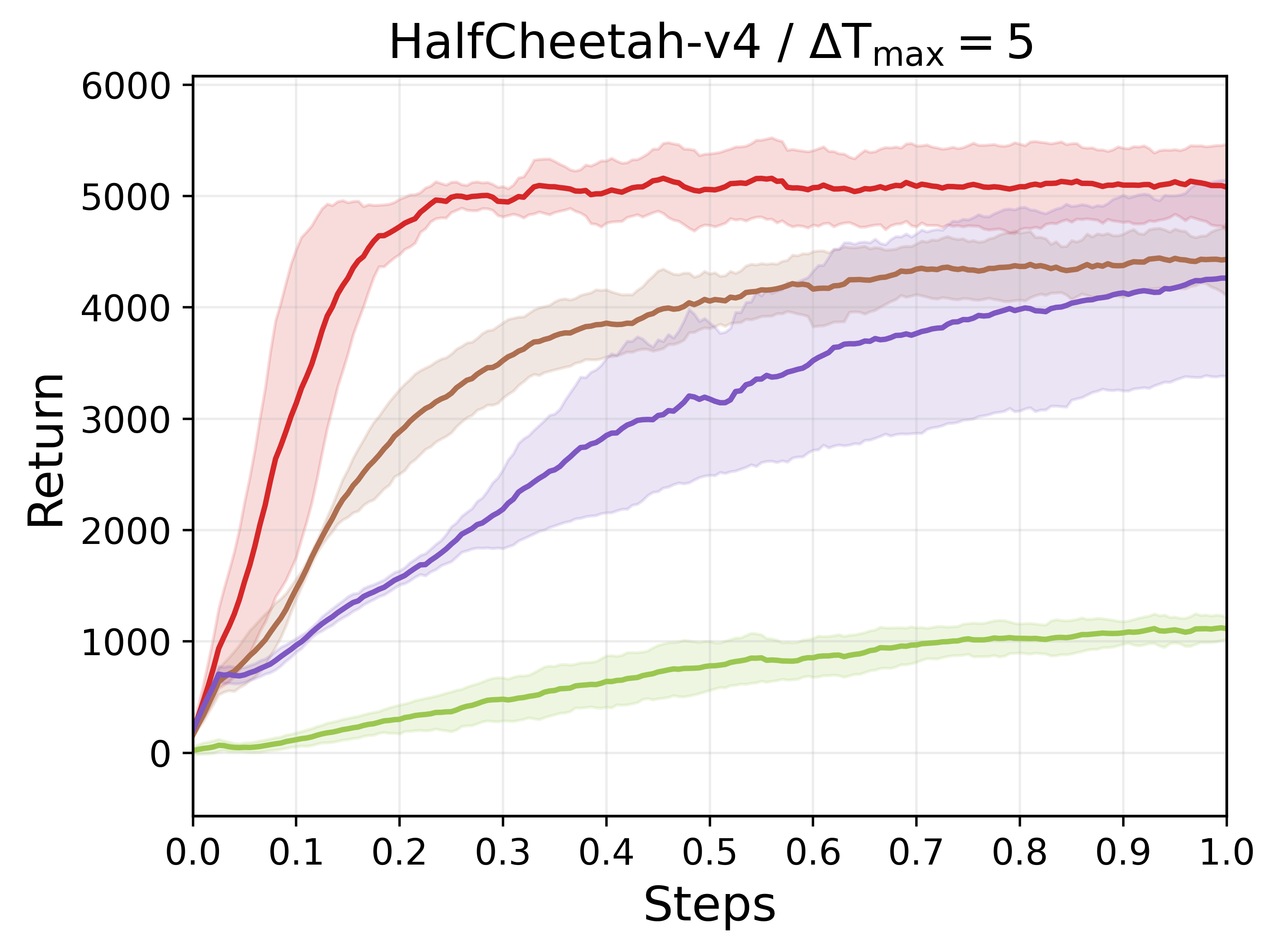} &
        \includegraphics[width=0.30\textwidth]{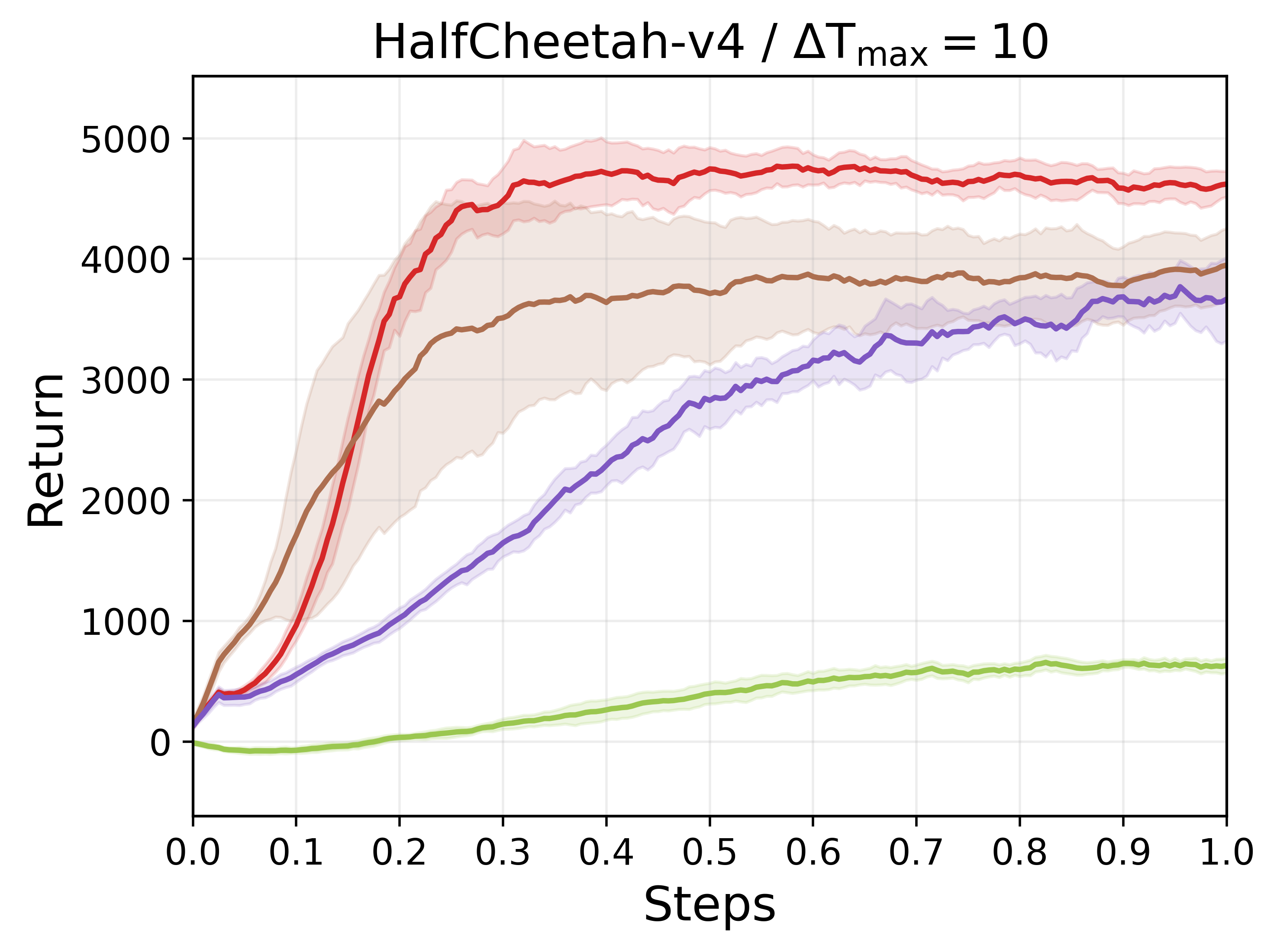} &
        \includegraphics[width=0.30\textwidth]{png/per_project_png/HalfCheetah-v4-25_randomdelay_25_halfcheetah_20260118_173939.png} \\[-1.4mm]

        \includegraphics[width=0.30\textwidth]{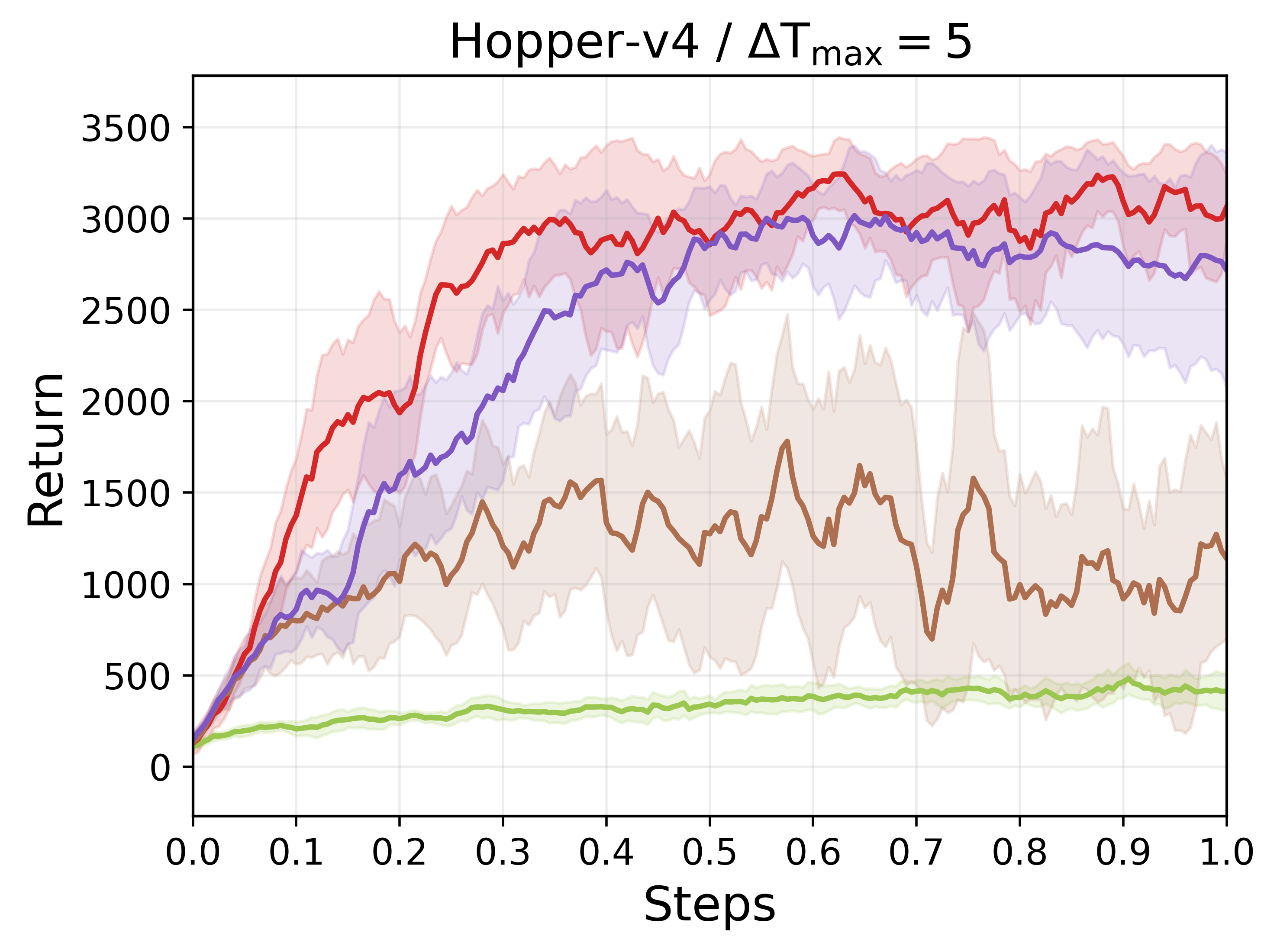} &
        \includegraphics[width=0.30\textwidth]{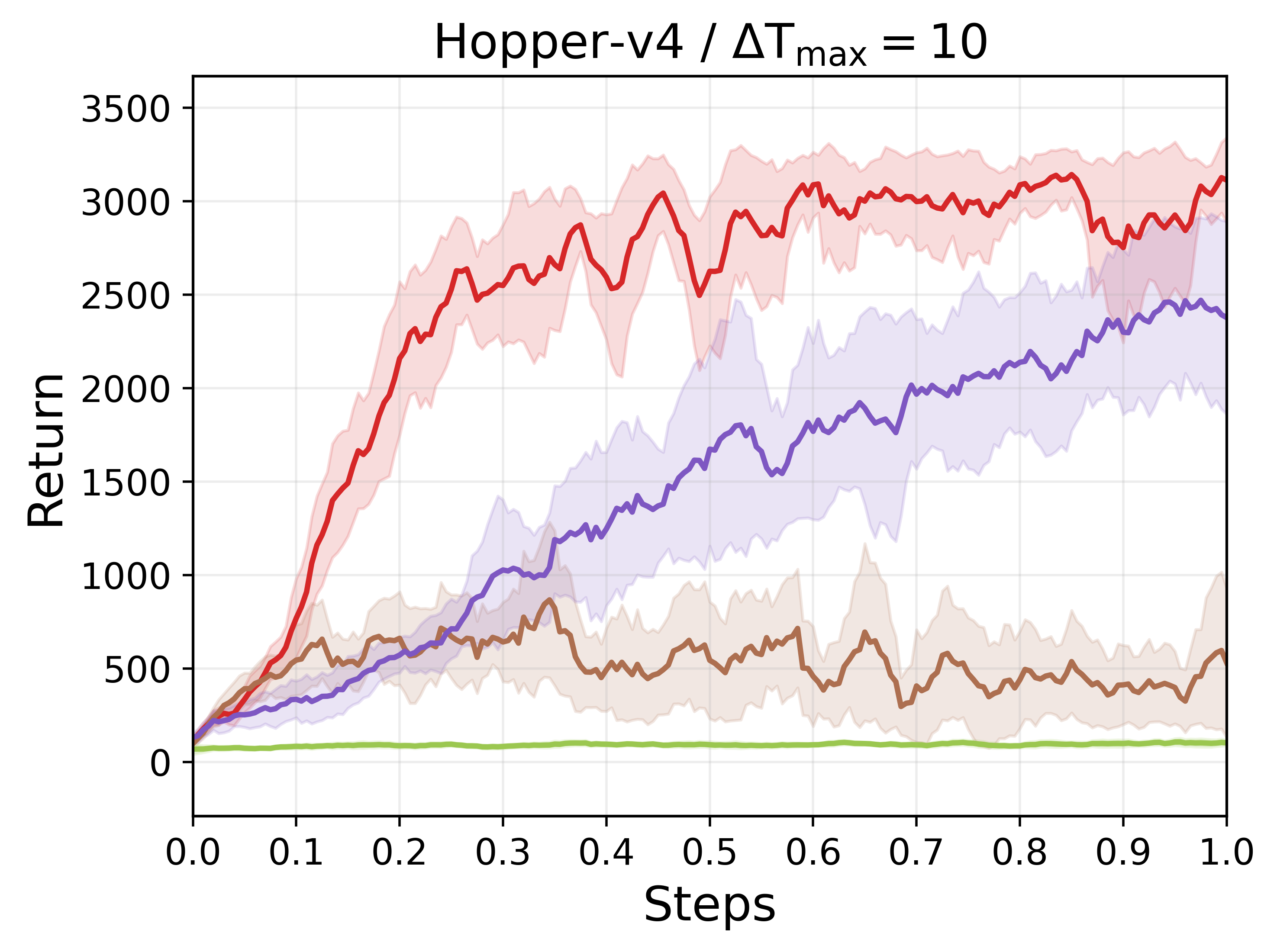} &
        \includegraphics[width=0.30\textwidth]{png/per_project_png/Hopper-v4-25_randomdelay_25_hopper_20260118_173939.png} \\[-1.4mm]

        \includegraphics[width=0.30\textwidth]{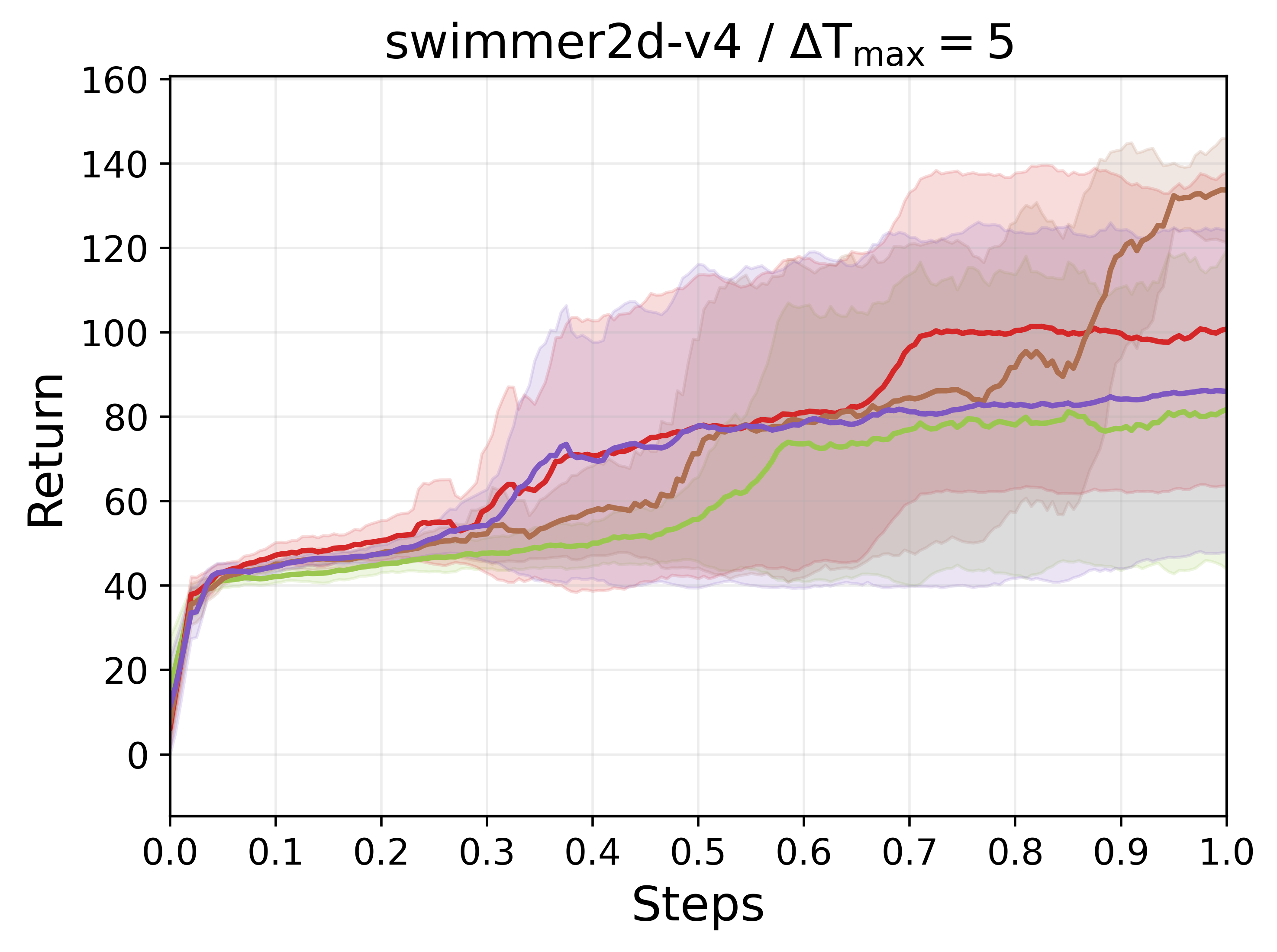} &
        \includegraphics[width=0.30\textwidth]{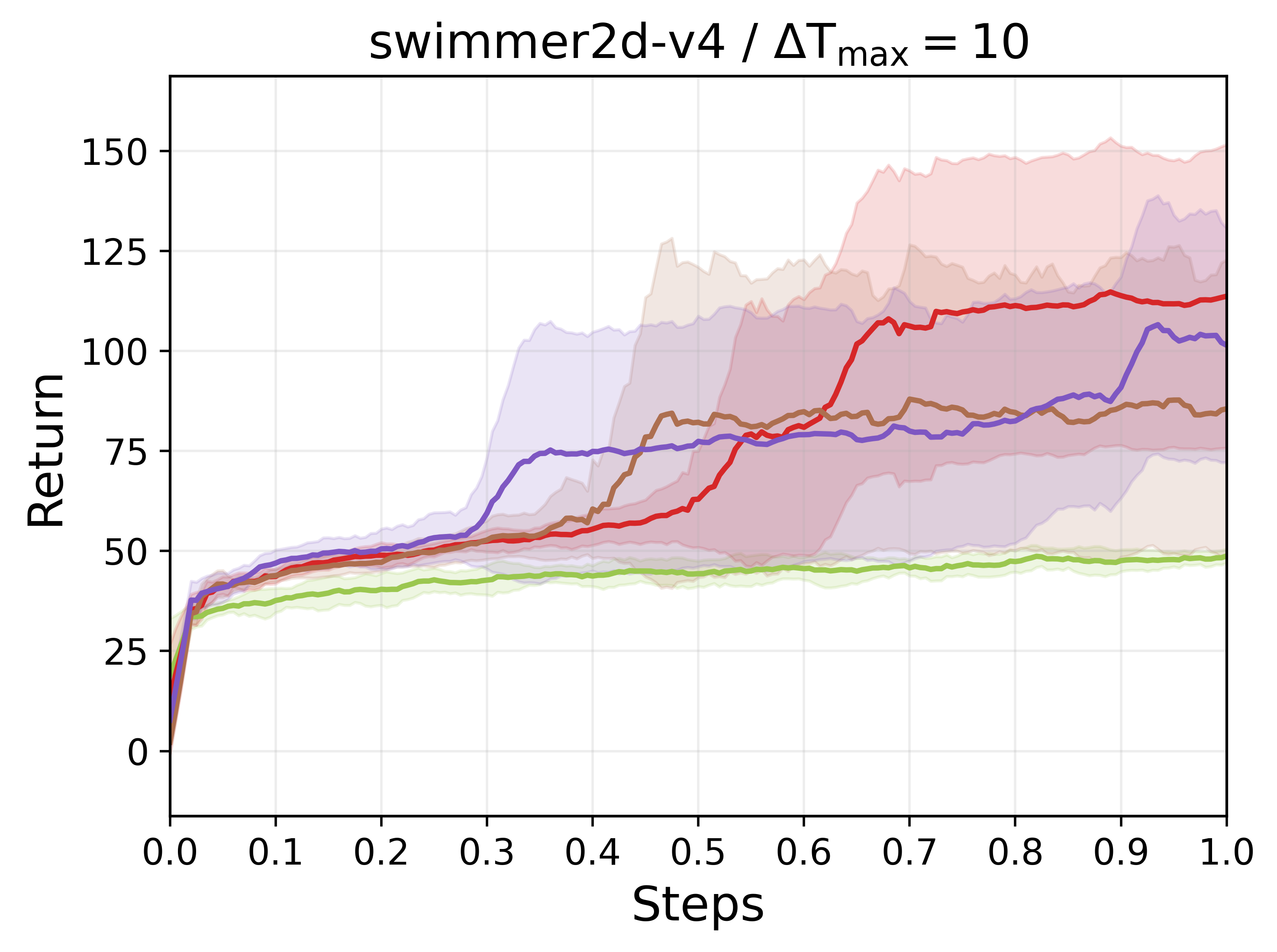} &
        \includegraphics[width=0.30\textwidth]{png/per_project_png/swimmer2d-v4-25_randomdelay_25_swimmer_20260118_173939.png} \\[-1.4mm]

        \includegraphics[width=0.30\textwidth]{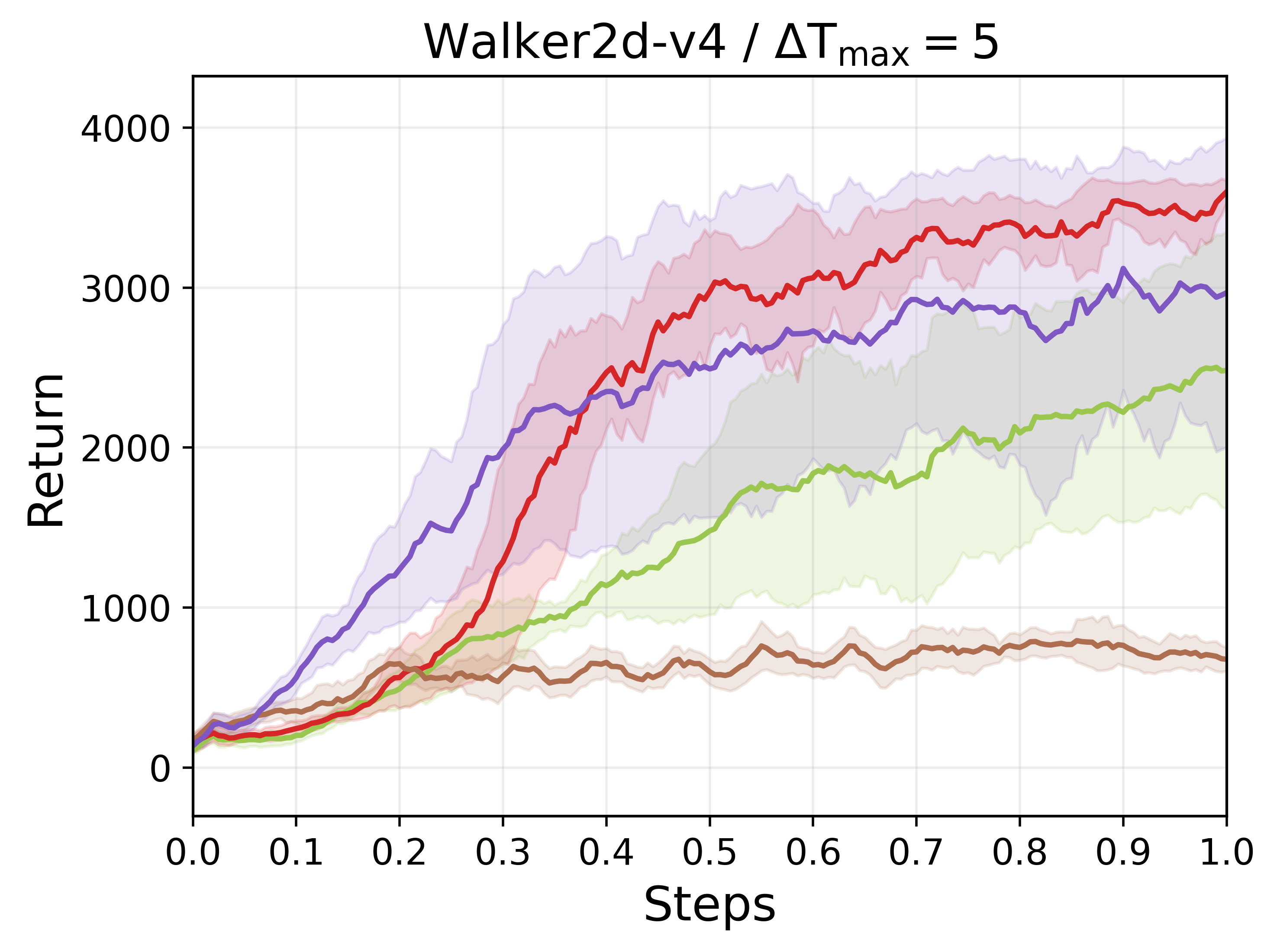} &
        \includegraphics[width=0.30\textwidth]{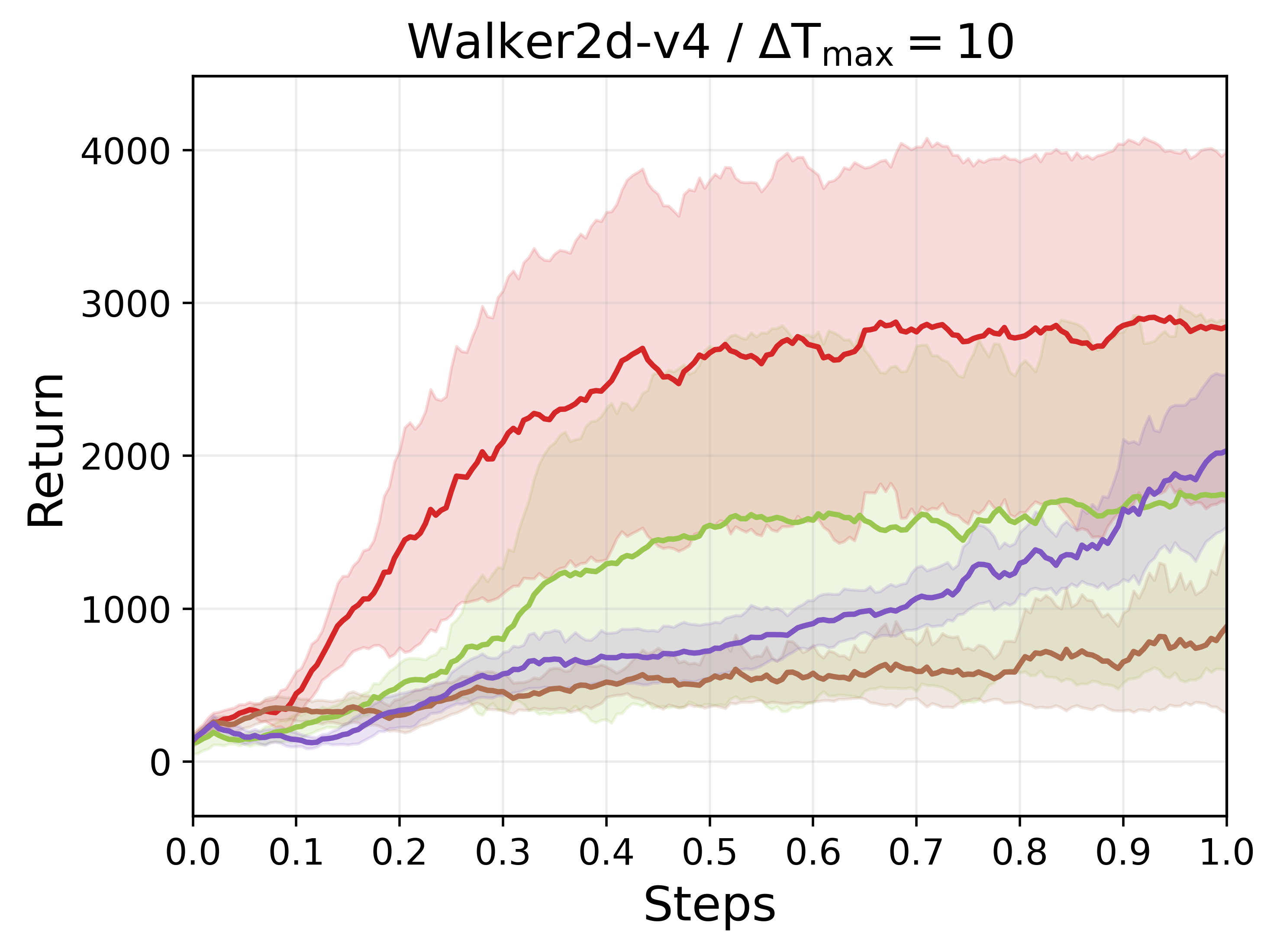} &
        \includegraphics[width=0.30\textwidth]{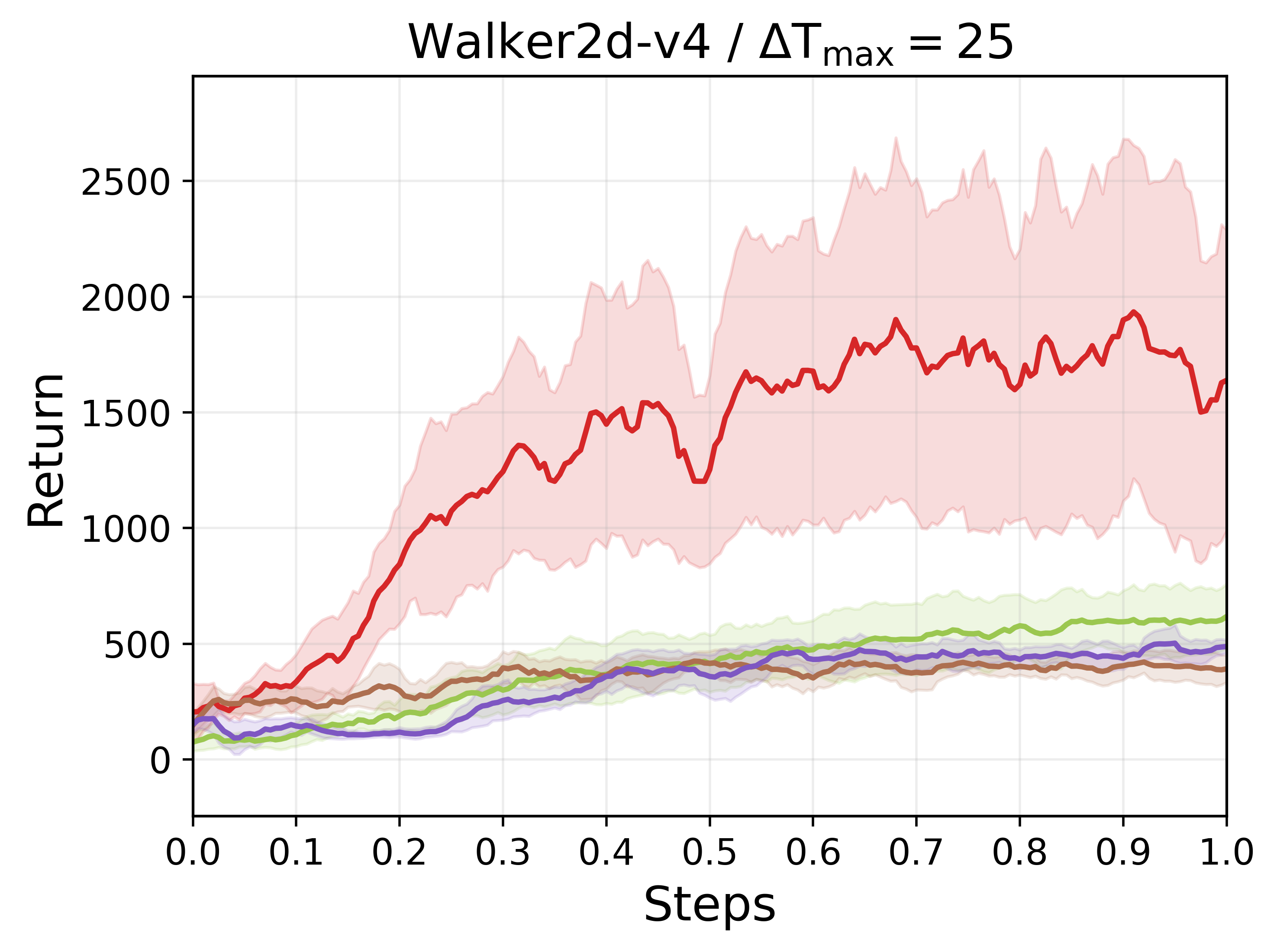} \\
    \end{tabular}

    \includegraphics[width=0.52\textwidth]{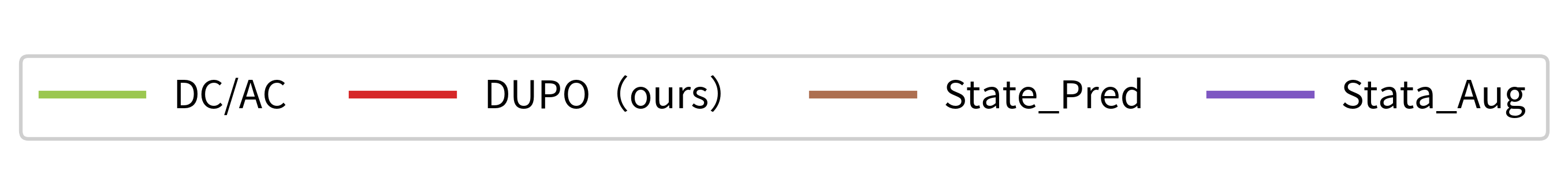}

    \caption{Learning curves under random observation delays with $\Delta T_{\max}\in\{5,10,25\}$.
    Results are averaged over $4$ random seeds; solid lines denote the mean return and shaded regions indicate $\pm 1$ standard deviation.}
    \label{fig:appendix_learning_curves}
\end{figure*}
\fi

\end{document}